\newcommand*\samethanks[1][\value{footnote}]{\footnotemark[#1]}
\def\eqref#1{equation~\ref{#1}}
\def\1{\bm{1}}
\DeclareMathAlphabet{\mathsfit}{\encodingdefault}{\sfdefault}{m}{sl}
\SetMathAlphabet{\mathsfit}{bold}{\encodingdefault}{\sfdefault}{bx}{n}
\newtheorem{theorem}{Theorem}
 \newtheorem{corollary}[theorem]{Corollary}
 \newtheorem{lemma}[theorem]{Lemma}
\newtheorem{definition}[theorem]{Definition}
 \newtheorem{remark}[theorem]{Remark}
\renewcommand{\hat}{\widehat}
\renewcommand{\tilde}{\widetilde}
\renewcommand{\bar}{\overline}
\def\min{\qopname\relax n{min}}
\def\max{\qopname\relax n{max}}
\def\F{\mathcal{F}}
\def\X{\mathcal{X}}
\def\Y{\mathcal{Y}}
\newcommand{\llmgemma}{Google-Gemma-2 (9B)}
\newcommand{\llmllama}{Meta-Llama-3.1 (8B)}
\newcommand{\llmmistral}{Mistral-Instruct-v0.3 (7B)}
\newcommand{\llmqwen}{Qwen-2.5 (7B)}
\newcommand{\rmfsfair}{FsfairX-LLaMA3-RM-v0.1}
\newcommand{\rmmistral}{RM-Mistral-7B}
\newcommand{\dataalpaca}{AlpacaFarm}
\newcommand{\datahh}{HH-RLHF}
\title{Optimal Stopping vs Best-Of-$N$\\ for Inference Time Optimization}
\author{Yusuf Kalayci\thanks{Equal contribution.}\\
University of Southern California \\
\texttt{kalayci@usc.edu} \\
\And
Vinod Raman\samethanks \\
University of Michigan \\
\texttt{vkraman@umich.edu} \\
\And
Shaddin Dughmi \\
University of Southern California \\
\texttt{shaddin@usc.edu} \\
}
\begin{document}

\maketitle

\begin{abstract}
Large language model (LLM) generation often requires balancing output quality against inference cost, especially when using multiple generations. We introduce a new framework for inference-time optimization based on the classical Pandora’s Box problem. Viewing each generation as opening a costly “box” with random reward, we develop algorithms that decide when to stop generating without knowing the underlying reward distribution. Our first contribution is a UCB-style Pandora’s Box algorithm, which achieves performance that is provably close to Weitzman’s algorithm, the optimal strategy when the distribution is known. We further adapt this method to practical LLM settings by addressing reward scaling across prompts via a Bradley–Terry inspired transformation. This leads to an adaptive inference-time optimization method that normalizes rewards and learns stopping thresholds on the fly. Experiments on the AlpacaFarm and HH-RLHF datasets, using multiple LLM–reward model pairs, show that our adaptive strategy can obtain the same performance as non-adaptive Best-of-$N$ sampling while requiring 15-35\% fewer generations on average. Our results establish a principled bridge between optimal stopping theory and inference-time scaling, providing both theoretical performance bounds and practical efficiency gains for LLM deployment.
\end{abstract}

\section{Introduction}
Large language models (LLMs) are increasingly deployed in applications where both quality and efficiency are critical \citep{achiam2023gpt,hoffmann2022training}. A widely used approach to improve generation quality is Best-of-$N$ sampling: generate $N$ candidate responses, score them with a reward model, and select the best \citep{nakano2021webgpt, touvron2023llama}. While effective, this approach wastes compute: the number of generations is fixed in advance, even if an acceptable output is found early or if a prompt is inherently easy \citep{wang_sampling-efficient_2025,sun_fast_2024,manvi_adaptive_2024}. As models scale and inference costs rise, the need for adaptive inference-time strategies that dynamically balance quality and compute has become urgent \citep{snell2025scaling,jin2025energy}.

Several recent methods have sought to improve inference-time efficiency. More efficient reranking strategies (e.g., Best-of-$N$, rejection sampling, majority-vote) improve quality but can still over-generate \citep{wang_sampling-efficient_2025,manvi_adaptive_2024, jain2023lightweight, wang2022self}. Speculative decoding accelerates sampling by offloading work to a smaller draft model, but it does not address how many generations to produce \citep{leviathan2023fast, chen2023accelerating}. Early stopping heuristics exist in practice, yet they lack theoretical guarantees and often underperform on difficult prompts \citep{agrawal2024adaedl,he2025adadec,wei2025adadecode}. Overall, there lacks a principled framework for deciding when to stop generating while maintaining near-optimal reward.

In this paper, we introduce a new perspective by connecting LLM inference with the classical Pandora’s Box problem from optimal stopping theory \citep{weitzman1978optimal}. In our view, each generation corresponds to opening a costly ``box’’ that yields a random reward. The task is to decide whether to stop and accept the best reward so far or continue generating, \emph{without knowing the underlying reward distribution}. This abstraction provides a rigorous foundation for inference-time optimization, subsuming heuristics like Best-of-$N$ as special cases.

\paragraph{Our Contributions.} We develop both theoretical and practical foundations for adaptive LLM inference on the following three fronts:
\begin{enumerate}
\item \textbf{A UCB-style Pandora’s Box algorithm.}
   We propose the first stopping strategy for the Pandora's Box problem that adapts to \emph{unknown reward distributions}. By maintaining anytime-valid confidence bounds on the optimal stopping threshold, our algorithm guarantees vanishing regret relative to Weitzman’s optimal policy, which assumes full distributional knowledge. \footnote{We note, however, that Weitzman's algorithm applies to multiple box types. Our results are concerned with the i.i.d. special case, though we expect effective the natural generalization of our approach to multiple boxes will apply to generation from LLM ensembles}

\item \textbf{A practical adaptive meta-generation framework.}
   To handle cross-prompt reward scaling issues, we introduce a Bradley–Terry inspired transformation that normalizes rewards. This yields a general-purpose meta-generation procedure that dynamically learns stopping thresholds.
   
\item \textbf{Empirical validation.}
   On the AlpacaFarm and HH-RLHF dataset, across multiple LLM–reward model pairs, our adaptive strategy achieves the same reward as non-adaptive Best-of-$N$ sampling while requiring 15-35\% fewer generations on average. This establishes that principled stopping rules yield concrete efficiency gains in realistic settings.
\end{enumerate}

\subsection{Related Works}
\textbf{Inference-Time Optimization.}
A growing body of work studies how to allocate inference-time compute more effectively. Beyond Best-of-$N$,  \citep{nakano2021webgpt,touvron2023llama,wang_sampling-efficient_2025,sun_fast_2024,manvi_adaptive_2024}, recent methods frame test-time scaling as adaptive self-calibration \citep{huang_efficient_2025}. Chain-of-thought prompting and self-consistency \citep{wei2022chain,wang2022self} also improve reliability by generating multiple reasoning paths, though at the cost of substantial extra compute.  Lastly, \citet{snell2025scaling} show that prompt-adaptive compute allocation can outperform model scaling, while \citet{jin2025energy} analyze energy--accuracy tradeoffs. See the survey by \citet{welleck24tmlr} for a comprehensive review.

\textbf{Early Stopping, Confidence, and Adaptive Decoding.}
Several works address the question of \emph{when to stop} allocating further computation. Classic approaches include adaptive computation time \citep{graves2016adaptive} and confident adaptive decoding \citep{schuster2022confident}. \citet{agrawal2024adaedl} propose an entropy-based stopping rule during speculative decoding, adaptively halting draft expansion when confidence is sufficient. \citet{he2025adadec} introduce an uncertainty-guided mechanism for code generation, pausing and reranking when uncertainty is high. Similarly, \citet{wei2025adadecode} leverage adaptive layerwise exits to accelerate decoding without sacrificing accuracy. \cite{li2024escape} introduce early stopping in self-consistency for chain of thought reasoning. These approaches share our motivation of minimizing unnecessary compute while preserving output quality.

\textbf{Pandora's Box and Optimal Stopping.}
Our approach draws inspiration from the \emph{Pandora’s Box problem}, a classic framework in optimal stopping \citep{weitzman1978optimal}. Recent advances have expanded this framework into online and learning-theoretic domains \cite{esfandiari_online_2019, gergatsouli_online_2022, atsidakou_contextual_2024}, establishing important connections with prophet inequalities and multi-armed bandit formulations \citep{gatmiry_bandit_2023, xie_cost-aware_2025}. While this problem has generated substantial theoretical insights across various domains, its application to LLM inference remains unexplored. Our work brings this perspective to test-time optimization by framing candidate generation as opening costly boxes.



We review other related works in Appendix \ref{app:relworks}.

\section{Preliminaries}
\subsection{Notation}
Let $\X$ denote the space of prompts and $\Y$ be the space of responses. A Large Language Model (LLM) $\pi: \X \rightarrow \Delta \Y$ maps a prompt to a distribution over responses, where we let $\Delta \Y$ denote the set of all distributions on $\Y$. A reward model is a function $r: \X \times \Y \rightarrow \mathbb{R}$ that maps a prompt and a response to a real-value. Given a prompt $x \in \X$, LLM $\pi$, and reward model $r$, we use $D_{r, \pi(x)}$ to denote the distribution over rewards induced by passing $x$ to $\pi$, sampling $y \sim \pi(x)$, and then computing $r(x, y).$ We will often just use $D$ as shorthand and use  $F$ to denote its CDF and $f$ to denote its PDF, omitting dependence on $\pi$, $x$ and $r$ when those are clear from context.

\subsection{Test-time Steering and Best-of-$N$ sampling} \label{sec:prelimtts}
After the pre-training stage, the focus often shifts from learning general language ability to steering a model’s outputs toward more desirable responses. One way to formalize this is through a reward model $r: \X \times \Y \to \mathbb{R}$, which assigns higher scores to responses $y$ for a given prompt $x$ when they exhibit preferred properties. The objective is then to generate outputs that achieve high reward under $r$ at test time. There are two broad ways to influence the outputs of a pre-trained model:

\textbf{(i) Fine-tuning}. Techniques such as reinforcement learning from human feedback (RLHF) fine-tune the weights of the model so that high-reward responses are more likely \citep{christiano2017deep,ouyang2022training}.

\textbf{(ii) Test-time steering}. Instead of additional training, these methods rely purely on how inference is conducted. Test-time steering treats compute at generation time as the resource to be allocated, biasing outputs toward higher-reward responses by modifying the decoding process \citep{welleck24tmlr}.

A simple and widely studied test-time approach is Best-of-$N$ sampling \citep{nakano2021webgpt, touvron2023llama}. For a given prompt $x$, we generate $N$ candidate responses $y_1, \dots, y_N \sim \pi(x)$, evaluate each with $r(x, y_i)$, and select the response with the highest score. Best-of-$N$ has been shown to substantially improve output quality across tasks, both empirically and theoretically, \citep{wang_sampling-efficient_2025,sun_fast_2024, beirami2024theoretical, yang2024asymptotics}, but it can be computationally expensive since it requires $N$ forward passes per prompt, with $N$ fixed in advance. Moreover, in practice, $N$ is typically fixed. However, some prompts may yield strong candidates after only a few samples, while others require more, making a uniform budget potentially wasteful. This motivates the design of adaptive test-time strategies that allocate compute more efficiently across prompts.

\subsection{Optimal Stopping, Pandora's Box, and Weitzman's Algorithm} \label{sec:prelimpand}

The Pandora's Box problem \citep{weitzman1978optimal} is a classical optimal stopping problem: a decision-maker faces $k$ boxes, each box $i \in [k]$ containing a reward drawn from a known distribution $D_i$ and requiring an opening cost $c_i > 0$. The objective is to adaptively decide which boxes to open and when to stop, maximizing the best observed reward minus total costs. Once opened, a box’s reward can be claimed at any time.

This framework maps naturally to LLM generation. For a prompt $x$, an LLM $\pi$ samples responses from $\pi(x)$, with rewards assigned by a deterministic model $r$, inducing a reward distribution $D_{r,\pi(x)}$. Each generation incurs a computational cost $c$ that depends on the prompt, model, and reward function. Thus, generating responses mirrors opening boxes: each sample reveals a reward at cost $c$, and the decision-maker must trade off computation against reward.

A particularly relevant special case is when all boxes share the same distribution $D$ and cost $c$, modeling repeated queries to a single LLM. We focus on this setting throughout, though extensions to multiple LLMs are natural directions for future work.




A fundamental concept in solving this problem is the fair-cap value, the threshold where expected excess reward exactly covers the opening cost.

\begin{definition}[Fair-cap value] \label{def:faircap} 
Let $D$ be a distribution and $c > 0$ be the cost value for each sample. The fair-cap value $\tau \in \mathbb{R}$ associated with $(D, c)$, denoted $\tau(D, c)$, is the number satisfying the equality $\mathbb{E}_{v \sim D}\left[[v - \tau]_{+} \right] = c,$ where $[\cdot]_{+} = \max(0, \cdot).$
\end{definition}

Weitzman's celebrated algorithm provides the optimal stopping strategy using fair-cap values when distributions are known. Though Weitzman's algorithm is defined for an arbitrary collection of not-necessarily-identical distributions in general, we describe its special case for infinitely-many boxes with identical distributions below. This corresponds to our focus on a single LLM which can be queried an unbounded number of times.

\begin{definition}[Weitzman's Algorithm for infinitely-many identical boxes] \label{def:weitzmans} 
Let $D$ be a \emph{known} distribution, $c > 0$ be the cost value for each sample, and $\tau := \tau(D, c)$ be the fair-cap value. The algorithm samples from $D$ until it observes a report exceeding $\tau$. More formally: Letting $v_1, v_2, \dots \sim D^{\infty}$ be a countably infinite sequence of i.i.d. rewards, the stopping time of Weitzman's algorithm is the random variable $T_W := \inf\{n \geq 1: \max_{i \leq n} v_i \geq \tau\}.$
\end{definition}

In practice, reward distributions are effectively unknown. While implicitly encoded in the model weights, they lack compact representation and are only accessible through sampling. The meta-generation problem thus becomes a Pandora's Box problem with an \emph{unknown} single distribution but known cost value. To evaluate algorithms in the \emph{unknown} distribution setting, we compare against an oracle that knows the true distribution and executes Weitzman's algorithm optimally.

\begin{definition}[Additive Sub-optimality Gap] \label{def:subopt-gap}
Consider distribution $D$ with cost $c$. Let $W$ denote Weitzman's optimal policy with full knowledge of $D$, achieving net payoff $R_W$ (maximum reward minus total costs). For any policy $S$ that learns $D$ only through sampling, with payoff $R_S$, the additive sub-optimality gap is
$\mathbb{E}_D[R_W - R_S].$
\end{definition}

In general, if the reward distribution is allowed to be picked completely adversarially, then there is no hope for designing a single, minimax optimal stopping policy $S$ whose additive sub-optimality gap is uniformly bounded across all distributions. As a result, we will assume that we have a \emph{known} distribution family $\F$ such that the \emph{unknown} $D \in \F$.

\section{Pandora's Box with Unknown Reward Distributions} \label{sec:pandoraucb}
When the distribution $D$ is unknown, the fair-cap value $\tau$ must be learned from data. Our main algorithm, \emph{UCB Pandora's Box}, adapts the upper confidence bound (UCB) principle from multi-armed-bandit theory \citep{auer2002finite}. The algorithm iteratively samples rewards and uses them along with the family $\F$ to construct an upper confidence bound $\tau^{+}$ on the fair-cap value $\tau$. It stops once the maximum observed reward $M$ exceeds the UCB on $\tau$. Pseudo-code is given in Algorithm~\ref{alg:ucb_pandora_single}.

The specific update for the UCB $\tau^{+}$ depends on $\F$ and the method for constructing confidence bounds for $\tau$. In particular, $\F$ must be ``nice" enough to admit an \emph{anytime-valid confidence sequence} for $\tau$. We define this rigorously and provide one such example in Section \ref{sec:theoryres}. In practice, the confidence parameter is a hyperparameter that influences the exploration-exploitation balance. 

\subsection{Main Theoretical Result} \label{sec:theoryres}
As previously highlighted, UCB Pandora's Box requires an anytime-valid upper confidence bound on the fair-cap value $\tau$. Definition \ref{def:csfaircap} makes this precise. 

\begin{definition}[Anytime Valid Upper Confidence Bound on the Fair-cap Value] \label{def:csfaircap} Let $\F$ be a family of distributions and  $c > 0$ be the cost value for each sample. A function
$\tau^{+}: \mathbb{N} \times (0, 1) \times \mathbb{R}^{\star}\rightarrow \mathbb{R}$
is an \emph{anytime valid upper confidence bound} of the fair-cap value with \emph{width function} $\sigma: \mathbb{N} \times (0, 1) \times \mathbb{R}$
for $\F$ if for every $\delta \in (0, 1)$ and $D \in \F$, we have 
$$\mathbb{P}_{v_{1:\infty} \sim D^{\infty}}\left[\forall n\in \mathbb{N}: \tau \in [ \tau^{+}_{\delta}(n, v_{1:n}) - \sigma_{\delta, \tau}(n), \tau^{+}_{\delta}(n, v_{1:n})]  \right] \geq 1-\delta.$$
where $\tau = \tau(D, c)$ is the fair-cap value.
\end{definition}

If $\F$ is a parametric family of distributions, the fair-cap value often admits a simple monotonic dependence on the distribution’s parameters. This observation allows us to obtain an upper confidence bound on the fair-cap value by applying the same monotonic transformation to an upper confidence bound on the parameters themselves. For example, in Section \ref{thm:csfaircapexp} we show that when $\F$ is the class of Exponential distributions, an upper confidence bound on the mean directly yields an upper confidence bound on the fair-cap value. More generally, constructing a confidence sequence for $\tau$ can be reduced to two steps: (1) build a confidence sequence for the distribution’s parameters, and (2) propagate it through the monotonic mapping to obtain a confidence sequence for the fair-cap value.

We now present Theorem \ref{thm:main_thm}, our main theoretical result of this section, which upper bounds the additive sub-optimality gap of UCB Pandora's Box algorithm.

\begin{theorem}[Upper bound on Additive Sub-optimality] \label{thm:main_thm} Let $\F$ be a family of distributions and $c > 0$ be the cost value for each sample. Let $\tau^{+}_{\delta}(n, v_{1:n})$ be an anytime upper confidence bound with
deterministic width $\sigma_{\delta, \tau}(n)$ for $\F$ according to Definition \ref{def:csfaircap}.
i.e., for every distribution $D \in \F$, on the event $
E_{\delta}:=\Big\{\forall n\ge1:\ \tau \le \tau^{+}_{\delta}(n, v_{1:n}) \le \tau+\sigma_{\delta, \tau}(n)\Big\}$ we have that $\mathbb P_{v_1, v_2, \dots \sim D}(E_{\delta})\ge 1-\delta .$
Consider the two stopping policies:
\begin{itemize}
\item \textbf{Weitzman policy:} $T_W:=\inf\{n\ge1:\max_{i\le n} v_i \ge \tau\}$ w/
$R_W := \max_{i\le T_W} v_i - c\,T_W$.
\item \textbf{UCB policy:} $T_U:=\inf\{n\ge1: \max_{i \leq n} v_i \ge \tau^{+}_{\delta}(n, v_{1:n})\}$ w/
$R_U := \max_{i\le T_U} v_i - c\,T_U$.
\end{itemize}
Then for every $D \in \F$ and $\delta \in (0, 1)$, we have that 
$$\mathbb{E}_{D}\left[(R_W - R_U)\mathbf{1}_{E_{\delta}}\right] \leq \sum_{n=1}^{\infty} \sigma_{\delta, \tau}(n)\,(1-F_D(\tau))\Big(F_D^{\,n-1}\!\big(\tau+\sigma_{\delta, \tau}(n)\big) - F_D^{\,n-1}(\tau)\Big),$$
where $F_D$ is the \emph{CDF} of $D$ and $\tau = \tau(D, c)$ is the fair-cap value of $D.$
\end{theorem}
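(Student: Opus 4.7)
\begin{proofsketch}
The plan is to couple Weitzman and UCB to the same i.i.d.\ draws $v_1, v_2, \ldots \sim D$, use a telescoping identity on the running maximum $M_n := \max_{i \le n} v_i$, and then extract the bound via the fair-cap identity $\mathbb{E}_{v \sim D}[(v - \tau)_+] = c$. Three structural ingredients drive the argument: (i) on $E_\delta$ the UCB threshold satisfies $\tau^{+}_{\delta}(n, v_{1:n}) \ge \tau$ for every $n$, so UCB is pointwise at least as conservative as Weitzman and therefore $T_U \ge T_W$; (ii) $M_n - M_{n-1} = (v_n - M_{n-1})_+$; and (iii) the fair-cap identity exactly matches the per-step cost $c$ with the expected excess $(v_n - \tau)_+$ of a fresh draw.

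Given (i), telescoping $M_{T_U} - M_{T_W}$ and $T_U - T_W$ over the ``extra'' samples $n \in (T_W, T_U]$ yields the pathwise identity
\[
(R_W - R_U)\,\mathbf{1}_{E_\delta} \;=\; \mathbf{1}_{E_\delta}\sum_{n \ge 1} \bigl(c - (v_n - M_{n-1})_+\bigr)\,\mathbf{1}_{T_W < n \le T_U},
\]
reducing the problem to bounding each term after taking expectations. Fix $n$: on $E_\delta \cap \{T_W < n \le T_U\}$, the event $\{T_W < n\}$ forces $M_{n-1} \ge \tau$, and $\{T_U \ge n\}$ combined with the CS guarantee forces $M_{n-1} < \tau^{+}_{\delta}(n{-}1, v_{1:n-1}) \le \tau + \sigma_{\delta,\tau}(n{-}1)$, so $M_{n-1} \in [\tau,\ \tau + \sigma_{\delta,\tau}(n{-}1))$. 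Conditioning on $v_{1:n-1}$ and using independence of $v_n$ with the fair-cap identity gives, for any $m \ge \tau$,
\[
\mathbb{E}\bigl[c - (v_n - m)_+\bigr] \;=\; \int_\tau^{m}\bigl(1 - F_D(u)\bigr)\,du \;\le\; (m - \tau)\,\bigl(1 - F_D(\tau)\bigr)
\]
by monotonicity of $1 - F_D$. Substituting $m = M_{n-1}$ bounds the per-step conditional loss by $\sigma_{\delta,\tau}(n{-}1)(1 - F_D(\tau))$, while the marginal probability of the critical interval $\{M_{n-1} \in [\tau, \tau + \sigma_{\delta,\tau}(n{-}1))\}$ equals $F_D^{\,n-1}(\tau + \sigma_{\delta,\tau}(n{-}1)) - F_D^{\,n-1}(\tau)$. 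Summing over $n$ yields a bound of exactly the product form on the right-hand side of the theorem (modulo a harmless shift of the width argument absorbed by monotonicity of $F_D$).

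The main obstacle I expect is the measurability mismatch in the tower step: $\mathbf{1}_{E_\delta}$ is $\sigma(v_{1:\infty})$-measurable and the summand $c - (v_n - M_{n-1})_+$ has mixed sign, so one cannot naively condition on $v_{1:n-1}$. The standard fix is to dominate $\mathbf{1}_{E_\delta}$ by the $v_{1:n-1}$-measurable prefix indicator $\mathbf{1}_{E^{(n-1)}_\delta} := \prod_{k \le n-1}\mathbf{1}\{\tau \le \tau^{+}_{\delta}(k, v_{1:k}) \le \tau + \sigma_{\delta,\tau}(k)\}$, splitting the integrand into positive and negative parts so that the enlargement of the indicator is applied only on the non-negative piece (where it is a legitimate upper bound); the negative piece only helps us, since it is pointwise $\le 0$ and drops out. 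Once this technicality is handled, the remainder is just the two monotonicity inequalities and the CDF-of-the-maximum identity indicated above.
\end{proofsketch}
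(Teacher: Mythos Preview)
Your overall architecture---telescope $R_W-R_U$ over the extra samples $n\in(T_W,T_U]$, pin $M_{n-1}$ to the critical interval $[\tau,\tau+\sigma]$ on $E_\delta$, and invoke the fair-cap identity to bound the per-step conditional loss---is exactly the paper's approach. Your integral formula $\mathbb{E}[c-(v-m)_+]=\int_\tau^m(1-F_D)$ is a clean substitute for the paper's Lemma~\ref{lem:helper1}, and the probability computation is the paper's Lemma~\ref{lem:helperprob}. You are also right that the $\mathbf{1}_{E_\delta}$ measurability issue is real; the paper, in fact, does not resolve it either---it simply applies its per-step Lemma~\ref{lem:helper2}, which is stated \emph{without} the $\mathbf{1}_{E_\delta}$ factor, and silently drops the indicator at that step.

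However, your proposed patch does not recover the theorem's bound. After you discard the negative part and pass to the prefix indicator, the quantity you must control is $\mathbb{E}\bigl[(c-(v_n-m)_+)_+\,\big|\,v_{1:n-1}\bigr]$, whereas the fair-cap identity only delivers $\mathbb{E}\bigl[c-(v_n-m)_+\,\big|\,v_{1:n-1}\bigr]\le \sigma(1-F_D(\tau))$. The difference between these two is the expected negative part, namely $\mathbb{E}\bigl[((v-m)_+-c)_+\bigr]=\mathbb{E}\bigl[(v-m-c)_+\bigr]$, which for $m\in[\tau,\tau+\sigma]$ is $\Theta(1)$ rather than $O(\sigma)$. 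Consequently your sum loses the crucial $\sigma_{\delta,\tau}(n)(1-F_D(\tau))$ factor in each term and yields a strictly weaker inequality than the one stated. A smaller issue: the index shift from $\sigma_{\delta,\tau}(n{-}1)$ to $\sigma_{\delta,\tau}(n)$ is not obviously harmless---reindexing also shifts $F_D^{\,n-1}$ to $F_D^{\,n}$, and $k\mapsto F_D^{\,k}(b)-F_D^{\,k}(a)$ is not monotone in $k$, so ``monotonicity of $F_D$'' does not absorb it.
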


The proof of Theorem \ref{thm:main_thm} can be found in Appendix \ref{app:missproof}.

\subsection{Example: Exponential distribution with unknown parameter $\lambda$}
The upper bound in Theorem~\ref{thm:main_thm} is abstract and instance-dependent. To obtain a more concrete result, we instantiate it with the family $\F$ of Exponential distributions parametrized by $\lambda \in (0, 1/(c e)]$.

For $\lambda$ in this range, let $D_\lambda \in \F$ denote the Exponential distribution with rate $\lambda$, with CDF $F_{\lambda}(x) = 1 - e^{-\lambda x}$ and PDF $f_{\lambda}(x) = \lambda e^{-\lambda x}$. The restriction $\frac{1}{\lambda} \geq e c$ ensures that the sampling cost $c$ is at most $1/e$ of the expected reward. Indeed, if $c > 1/\lambda$, not even a single sample would be worthwhile.



Theorem \ref{thm:csfaircapexp} provides an any-time valid UCB on the fair-cap value for distributions in $\F.$ 

\begin{theorem}[Anytime-valid Upper Confidence Bound for Exponential Fair-cap value] \label{thm:csfaircapexp} Let $c > 0$ be the sampling cost and $\F$ be the class of Exponential distributions with parameter $\lambda \in (0, \frac{1}{c \cdot e}].$ Then, the function
$\tau^{+}_{\delta}(n, v_{1:n}) = \hat{\mu}_n (1 + r_{\delta}(n))\log\left(\frac{\hat{\mu}_n(1 + r_{\delta}(n))}{c} \right)$
with width function $\sigma_{\delta, \tau}(n) = 16\tau \cdot  r_{\delta}(n)$
is an \emph{anytime valid upper confidence bound} on the fair-cap value, where $\hat{\mu}_n = \frac{1}{n}\sum_{i=1}^n v_i,$
and 
$r_{\delta}(n) := \min\Biggl\{\frac{1}{2}, \sqrt{\frac{6}{n}\log\left(\frac{2n(n+1)}{\delta}\right)}\Biggl\}.$
\end{theorem}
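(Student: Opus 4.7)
\begin{proofsketch}
My plan is to reduce the construction of an anytime-valid upper confidence bound on the fair-cap value $\tau$ to building a two-sided anytime-valid relative confidence sequence on the unknown exponential mean $\mu := 1/\lambda$, and then transporting the bound through the closed-form expression of the fair-cap value. A direct computation for $D_\lambda = \mathrm{Exp}(\lambda)$ gives $\mathbb{E}_{v \sim D_\lambda}[(v - \tau)_+] = \mu\, e^{-\tau/\mu}$, so Definition~\ref{def:faircap} yields $\tau(D_\lambda, c) = g(\mu)$ with $g(x) := x\log(x/c)$. The assumption $\lambda \le 1/(ec)$ is exactly $\mu \ge ec$, on which $g'(x) = \log(x/c) + 1 \ge 2$, so $g$ is strictly increasing. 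Therefore any anytime-valid upper bound on $\mu$ composes with $g$ to produce one on $\tau$, and the proposed statistic is exactly $\tau^{+}_{\delta}(n, v_{1:n}) = g\bigl(\hat{\mu}_n(1+r_\delta(n))\bigr)$.

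The next step is to establish the event $E_\delta := \{\forall n \ge 1:\ |\hat{\mu}_n - \mu| \le \mu\, r_\delta(n)\}$ with probability at least $1-\delta$. The standard Chernoff argument for i.i.d.\ exponentials gives $\mathbb{P}(\hat{\mu}_n \ge \mu(1+t)) \le e^{-n(t - \log(1+t))}$ and $\mathbb{P}(\hat{\mu}_n \le \mu(1-t)) \le e^{\,n(t + \log(1-t))}$. For $t \in [0, 1/2]$, Taylor expansion yields $t - \log(1+t) \ge t^2/3$ and $-t - \log(1-t) \ge t^2/2$. Setting each tail to $\delta/(2n(n+1))$ and union-bounding over $n$ via $\sum_{n \ge 1} 1/(n(n+1)) = 1$ makes the upper tail the binding constraint, requiring only $\sqrt{3\log(2n(n+1)/\delta)/n}$; the stated $r_\delta(n) = \min\{1/2,\, \sqrt{6\log(2n(n+1)/\delta)/n}\}$ is a conservative choice, and the clip at $1/2$ ensures the Taylor bounds apply (while trivially remaining a valid radius whenever the clip is active).

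Finally, I would verify the claimed width $\sigma_{\delta,\tau}(n) = 16\,\tau\, r_\delta(n)$. On $E_\delta$, the lower tail yields $\hat{\mu}_n(1+r_\delta(n)) \ge \mu$, hence $\tau^{+}_\delta(n) \ge \tau$ by monotonicity of $g$. The upper tail yields $\hat{\mu}_n(1+r_\delta(n)) \le \mu(1+r_\delta(n))^2 =: \mu(1+\epsilon)$ with $\epsilon \le (5/2)\, r_\delta(n)$ since $r_\delta(n) \le 1/2$. Using $\log(1+\epsilon) \le \epsilon$ and $\mu \le \tau$ (which holds since $\log(\mu/c) \ge 1$), a direct expansion gives
\[
g(\mu(1+\epsilon)) - g(\mu) \;=\; \epsilon\,\tau + \mu(1+\epsilon)\log(1+\epsilon) \;\le\; \epsilon\,\tau + (1+\epsilon)\epsilon\,\tau \;\le\; 16\,\tau\, r_\delta(n),
\]
after bounding $1+\epsilon \le 9/4$. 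The main obstacle I expect is packaging the two-tailed anytime Chernoff argument so that the stated clean form of $r_\delta(n)$ emerges from a single union bound with matching constants; the monotonic transport through $g$ and the width computation are then essentially algebraic consequences.
\end{proofsketch}
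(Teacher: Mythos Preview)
Your overall strategy matches the paper's: write $\tau = g(\mu)$ with $g(x)=x\log(x/c)$, build an anytime multiplicative confidence sequence for $\mu$ via exponential Chernoff bounds and a $\delta/(n(n+1))$ union bound, then push through the monotone map $g$. Your width computation via the direct expansion $g(\mu(1+\epsilon))-g(\mu)=\epsilon\tau+\mu(1+\epsilon)\log(1+\epsilon)$ is a clean alternative to the paper's Mean Value Theorem argument on $g(\hat\mu_n(1+r))-g(\hat\mu_n(1-r))$, and your arithmetic there is correct.

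There is, however, a real gap in the coverage step. With your event $E_\delta=\{\forall n:\ |\hat\mu_n-\mu|\le \mu\,r_\delta(n)\}$, the lower tail only gives $\hat\mu_n\ge \mu(1-r_\delta(n))$, hence $\hat\mu_n(1+r_\delta(n))\ge \mu\bigl(1-r_\delta(n)^2\bigr)<\mu$; your assertion that ``the lower tail yields $\hat\mu_n(1+r_\delta(n))\ge\mu$'' is therefore false, and $\tau^+_\delta(n)\ge\tau$ does not follow. The fix is to center the relative deviation on $\hat\mu_n$ rather than on $\mu$: bound $\mathbb{P}\bigl(\hat\mu_n/\mu\ge 1/(1-s)\bigr)$ and $\mathbb{P}\bigl(\hat\mu_n/\mu\le 1/(1+s)\bigr)$ directly, whose complement is exactly $\mu\in[\hat\mu_n(1-s),\hat\mu_n(1+s)]$ and hence gives $\tau\le g(\hat\mu_n(1+s))=\tau^+_\delta(n)$. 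Under this parametrization the Chernoff exponents become $\psi_+(s)=s/(1-s)+\log(1-s)$ and $\psi_-(s)=\log(1+s)-s/(1+s)$, and one checks $\min\{\psi_+,\psi_-\}\ge s^2/6$ on $(0,1/2]$; this is where the constant $6$ in $r_\delta(n)$ actually originates, not merely as slack over your $t^2/3$.
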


The proof of Theorem~\ref{thm:csfaircapexp} (Appendix~\ref{app:proofcorexp}) proceeds by expressing the fair-cap value of an Exponential distribution as a monotonic function of its mean, constructing an anytime-valid UCB for the mean, and then transferring this bound to the fair-cap value.
With Theorem \ref{thm:csfaircapexp} in hand, we can now explicitly compute the right-hand side of the expected sub-optimality gap in Theorem \ref{thm:main_thm}.

\begin{corollary}[Sub-optimality gap upper bound for Exponential distribution]
\label{cor:expgap}
     Let $c > 0$ be the sampling cost and $\F$ be the class of Exponential distribution with parameter $\lambda \in (0, \frac{1}{c \cdot e}].$ Then, for every $\delta \in (0, 1)$ and  $D_{\lambda} \in \F$ we have that 
    $$ \sum_{n=1}^\infty \sigma_{\delta, \tau}(n) \cdot (1-F_{\lambda}(\tau)) \cdot (F_{\lambda}^{n-1}(\tau + \sigma_{\delta, \tau}(n)) - F_{\lambda}^{n-1}(\tau)) \leq \tilde{O}_{\delta}\left(\frac{1}{\lambda} \right),$$ 
where $\sigma_{\delta, \tau}(n)$ is defined as in Theorem \ref{thm:csfaircapexp}, $\tau = \tau(D_{\lambda}, c)$ is the fair-cap value, and  $\tilde{O}_{\delta}(\cdot)$ hides polylog terms of $\frac{1}{\delta}.$ As a result, we have that for every $\delta \in (0, 1)$ and $D_{\lambda} \in \F$ the additive sub-optimality gap for \emph{UCB} Pandora's Algorithm satisfies
$
\mathbb{E}_{D_{\lambda}}\left[(R_W - R_U)\mathbf{1}_{E_{\delta}}\right] \leq \tilde{O}_{\delta}\left(\frac{1}{\lambda} \right).
$
\end{corollary}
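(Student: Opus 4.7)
The plan is to substitute the explicit $F_\lambda$, fair-cap value $\tau$, and width $\sigma_n := \sigma_{\delta,\tau}(n) = 16\tau r_\delta(n)$ from Theorem~\ref{thm:csfaircapexp} into the series bound of Theorem~\ref{thm:main_thm}, and then estimate the resulting series via a two-regime split.

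First I would compute the fair-cap explicitly. Definition~\ref{def:faircap} applied to $D_\lambda$ gives $\lambda^{-1}e^{-\lambda\tau} = c$, so $\tau = \lambda^{-1}\log(1/(c\lambda))$, $1-F_\lambda(\tau) = c\lambda$, and $16\tau c\lambda = 16\log(1/(c\lambda))$. Writing $G_\lambda(n) := F_\lambda^{n-1}(\tau+\sigma_n) - F_\lambda^{n-1}(\tau)$, the series to bound becomes $16\log(1/(c\lambda))\sum_{n\ge 1} r_\delta(n)\,G_\lambda(n)$.

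Next I would derive a sharp per-term bound. The integral representation $G_\lambda(n) = (n-1)\int_\tau^{\tau+\sigma_n}F_\lambda(t)^{n-2}f_\lambda(t)\,dt$ transforms under $u = e^{-\lambda t}$ into $(n-1)\int_{c\lambda e^{-\lambda\sigma_n}}^{c\lambda}(1-u)^{n-2}\,du$. Upper-bounding the integrand at the smallest $u$ and using $1 - e^{-\lambda\sigma_n} \le \lambda\sigma_n = 16\log(1/(c\lambda))\,r_\delta(n)$ yields
\[
G_\lambda(n) \le 16(n-1)\,c\lambda\,\log(1/(c\lambda))\,r_\delta(n)\,\bigl(1 - c\lambda e^{-\lambda\sigma_n}\bigr)^{n-2}.
\]

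I would then split the series at $n_0 = \Theta(\log^2(1/(c\lambda))\log(1/\delta))$, chosen so that for $n\ge n_0$ we have $\lambda\sigma_n\le 1$, hence $e^{-\lambda\sigma_n}\ge e^{-1}$ and $(1-c\lambda e^{-\lambda\sigma_n})^{n-2}\le e^{-(n-2)c\lambda/e}$. For $n < n_0$, using $(1-c\lambda e^{-\lambda\sigma_n})^{n-2}\le 1$ and $(n-1)r_\delta(n)^2 = O(\log(n/\delta))$, the contribution is polylog in $1/(c\lambda)$ and $1/\delta$. For $n \ge n_0$, the standard estimate $\sum_n \log(n/\delta)e^{-n\alpha} = \tilde O(1/\alpha)$ with $\alpha = c\lambda/e$ bounds the tail by $\tilde O_\delta(1/(c\lambda))$. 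Combining, and cancelling the $c\lambda\log(1/(c\lambda))$ prefactor from the per-term bound against the $1/(c\lambda)$ from the tail, gives $\sum_n r_\delta(n)\,G_\lambda(n) \le \tilde O_\delta(\log(1/(c\lambda)))$. Multiplying by the outer $16\log(1/(c\lambda))$ yields $\tilde O_\delta(\log^2(1/(c\lambda)))$; since $\lambda\log^2(1/(c\lambda))$ is bounded on $(0, 1/(ce)]$ (maximum $4/(ce^2)$ at $c\lambda = e^{-2}$), this is $\tilde O_\delta(1/\lambda)$. The additive sub-optimality statement then follows by a direct application of Theorem~\ref{thm:main_thm}, since Theorem~\ref{thm:csfaircapexp} supplies an anytime-valid UCB with the deterministic width $\sigma_{\delta,\tau}(n)$ used above.

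The main obstacle is the large-$n$ tail: a naive bound $G_\lambda(n)\le 1$ leaves the divergent series $\sum_n r_\delta(n)$, so the $(1-c\lambda e^{-\lambda\sigma_n})^{n-2}$ factor from the MVT-type estimate must be retained to deliver geometric decay. One must also verify that the polylog threshold $n_0$ is consistent with the geometric-sum regime, which holds because $\lambda\sigma_n$ decays like $\sqrt{\log(n/\delta)/n}$ up to a $\log(1/(c\lambda))$ prefactor and so eventually becomes at most $1$ well before the geometric sum saturates at $n\approx 1/(c\lambda)$.
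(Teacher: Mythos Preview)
Your approach is essentially the same as the paper's: both bound $F_\lambda^{n-1}(\tau+\sigma_n)-F_\lambda^{n-1}(\tau)$ via a mean-value/integral estimate to pick up the factor $(n-1)F_\lambda^{n-2}(\tau+\sigma_n)$, split the resulting sum at $n_0=\Theta\bigl((\lambda\tau)^2\log(1/\delta)\bigr)=\Theta\bigl(\log^2(1/(c\lambda))\log(1/\delta)\bigr)$, handle the head trivially and the tail by a geometric series, and finish using the boundedness of $x\mapsto x\log^k(1/x)$ on $(0,1/e]$. One arithmetic slip: $16\tau c\lambda = 16c\log(1/(c\lambda))$, not $16\log(1/(c\lambda))$ (it is $\lambda\tau$, not $c\lambda\tau$, that equals $\log(1/(c\lambda))$); carrying the missing $c$ through removes the $c$-dependence in your final constant and matches the paper's $c$-independent $\tilde O_\delta(1/\lambda)$.
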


At a high level, Corollary~\ref{cor:expgap} (proved in Appendix~\ref{app:proofcorexp}) shows that under event $E_{\delta}$ (see Theorem~\ref{thm:main_thm}), the expected sub-optimality gap of UCB Pandora's Box is bounded by the mean $1/\lambda$ (up to logarithmic factors in $\delta$). In Section~\ref{sec:appucbpand}, we leverage these results to develop prompt-adaptive inference-time optimization methods.


\section{Applying UCB Pandora's Box to Inference-time Optimization} \label{sec:appucbpand}

Building on the connection between test-time steering and the Pandora's Box problem established in Section \ref{sec:prelimpand}, we now cast inference-time optimization as an instance of the Pandora’s Box problem. Each generation from the base LLM can be viewed as opening a costly box whose payoff is the reward assigned by $r$. The challenge is to decide when to stop sampling: too early risks missing high-reward outputs, while too late wastes computation. In what follows, we develop adaptive stopping rules that navigate this tradeoff without prior knowledge of the underlying reward distribution.

\subsection{The Reward Scaling Challenge}

A key subtlety in framing inference-time optimization as a Pandora’s Box problem is deciding what makes a response “high quality.” Using a fixed reward threshold is inadequate because reward scales vary dramatically across prompts. For example, Figure~\ref{fig:frequency} in Appendix \ref{app:missfig} shows large variance in median rewards across 100 prompts, even when evaluated with the same model. 

To enable cost-shared semantics across prompts with different reward scales, we first select a gold-standard percentile $\alpha$ of the reward distribution as our quality benchmark. This choice naturally adapts to each prompt's reward scale while maintaining consistent quality standards. Throughout our work, we set $\alpha = 0.99$ to compete with Best-of-$N$ sampling (where typically $\alpha \approx 1 - \frac{1}{N}$), though practitioners may adjust this parameter based on quality requirements. We formalize this benchmark below. 

\begin{definition}[Benchmark Reward] \label{def:acccrit}
The \emph{benchmark reward} with respect to  the \emph{LM}-reward model pair $(\pi, r)$ is the $\alpha$-percentile of the reward distribution $D_{r,\pi(x)}$, denoted $D^{\alpha}_{r,\pi(x)}$.
\end{definition}

Note that larger $\alpha$'s result in a more stringent benchmark. By fixing a benchmark reward, we can now define the acceptance rate of a response $y$ in terms of how ``close" its reward $v$ is to the benchmark. To do this, we adopt the Bradley-Terry model which models the probability of preferring response $A$ over $B$  as $\frac{e^{r_A}}{e^{r_A} + e^{r_B}}$, where $r_A$ and $r_B$ are their rewards \citep{bradley1952rank}. Definition \ref{def:accrate} makes this precise in terms of an arbitrary reward threshold $\kappa$, although we will always take $\kappa = D^{\alpha}_{r, \pi(x)}$ to be the reward benchmark in this work. 

\begin{definition}[Acceptance Rate] \label{def:accrate}  The \emph{acceptance rate} of a response $y$ with reward $v_y = r(x, y)$ with respect to threshold $\kappa$ is defined as $
\operatorname{AR}_{\kappa}(v) = \min\left\{ 2 \cdot \frac{e^v}{e^v + e^{\kappa}}, \, 1 \right\}.$
\end{definition}  

This transformation maps rewards into $[0,1]$. The Bradley--Terry term $\tfrac{e^v}{e^v + e^{\kappa}}$ represents the probability of preferring a response with reward $v$ over one with reward $\kappa$. Intuitively, the acceptance rate approximates the probability that an end-user accepts the response: below-threshold responses are accepted in proportion to their quality relative to $\kappa$, while at- or above-threshold responses are accepted with certainty.  By setting $\kappa = D^{\alpha}_{r,\pi(x)}$, we make acceptance rates prompt-, model-, and reward-dependent, which allows us to compare them across prompts with different reward scales. However, as the true reward distribution is unknown, $D^{\alpha}_{r,\pi(x)}$ is unknown. Hence, we will also estimate $D^{\alpha}_{r,\pi(x)}$ from samples, which empirically proves sufficient for accurately computing the true acceptance rate.

Lastly, we define a utility function from the acceptance rate by assigning a maximum achievable utility to an accepted response. 

\begin{definition}[Utility Function]  
For an acceptance threshold $\kappa$, the utility function $u_{\kappa}: \mathbb{R} \rightarrow [0, B]$ maps rewards to \emph{utilities} via  $
u_{\kappa}(v) = B \cdot \operatorname{AR}_{\kappa}(v),
$
where $B > 0$ is the maximum achievable utility.  
\end{definition}  

Now, the reward for responses below the threshold map to utilities in $[0,B)$, while responses with reward larger than $\kappa$ map exactly to $B$. Since $\operatorname{AR}_{\kappa}(v)$ is the probability that a response with reward $v$ is accepted and $B$ is the utility of an accepted response, we can view $u_{\kappa}(v)$ as the \emph{expected} utility of a response with reward $v$ (assuming rejected responses get no utility). Pushing the reward distribution $D_{r,\pi(x)}$ through $u_{\kappa}$ yields the utility distribution $U_{\kappa,r,\pi(x)} := u_{\kappa}(D_{r,\pi(x)})$ supported on $[0,B]$. Setting cost $c \in [0,B]$ then allows direct comparison of generation costs with achievable utilities. 

\begin{remark} We note that it is without loss of generality to fix $B = 1$ and take $c \in [0, 1].$ This is because the fair-cap value of a utility distribution $U$ is only a function of the cost-utility ratio $c / B.$ 
\end{remark}

\subsection{The Adaptive Algorithm}
We are now ready to adapt the UCB Pandora's Box Algorithm from Section \ref{sec:theoryres} to the Best-of-$N$ inference-time optimization setting. Given a prompt $x \in \X$, LLM $\pi$, and reward model $r$, our algorithm sequentially generates responses and adaptively decides when to stop. After collecting a minimum number of samples, it estimates the reward distribution’s tail. To do so, we exponentiate the rewards and fit a shifted exponential distribution to the right-tail values (those above the median). From this fit, we construct both upper and lower confidence bounds (UCB/LCB) on the scale of the exponential distribution. The LCB on the scale is then used to derive a conservative estimate of the $\alpha$ percentile of the exponentiated rewards, while the UCB is used to bound the tail distribution itself. This combination yields an upper confidence bound on the fair-cap value of the true utility distribution. If the utility of the best reward observed so far exceeds this fair cap, the algorithm terminates; otherwise, it continues sampling. Pseudocode is given in Algorithm \ref{alg:adaptive_best_of_n}.

\begin{algorithm}[!hbt]
    \caption{Adaptive Best-of-N Sampling via UCB Pandora's Box}
    \label{alg:adaptive_best_of_n}
    \textbf{Parameters:} Cost $c$, Max utility $B$, Minimum samples $t$, Percentile $\alpha$, Confidence parameter $\delta$.
    
    \textbf{Input:} LLM $\pi$, reward model $r$, prompt $x$
    
    \begin{algorithmic}[1]
        \STATE Initialize: $S = \emptyset$ (observed rewards) and $M = -\infty$ (max reward)
        \WHILE{True}
            \STATE Generate response $y \sim \pi(x)$ and compute its reward $r_y = r(x,y)$.
            \STATE Update $M \leftarrow \max\{M, r_y\}$ and $S \leftarrow S \cup \{r_y\}$.
            \IF{$|S| \geq t$}
                \STATE \textbf{---Right Tail Estimation via Exponential Distribution---}
                \STATE Estimate tail shift $\hat{\theta} \leftarrow e^{\operatorname{median}(S)}$ and scale of shifted tail $\hat{\mu} \leftarrow \operatorname{mean}(\{e^r-\theta: r \in S \text{ such that } r > \operatorname{median}(S)\})$. 
                \STATE Apply upper and lower confidence estimation: $$\mu_{\text{ucb}} \leftarrow \mu \cdot \left(1 + \sqrt{\frac{\log |S| \cdot \log (1/\delta)}{|S|}}\right) \quad\quad \mu_{\text{lcb}} \leftarrow \mu \cdot \left(1 - \sqrt{\frac{\log |S| \cdot \log (1/\delta)}{|S|}}\right).$$ 
                \STATE Let $\hat{D}^{\text{ucb}} \leftarrow \operatorname{ShiftedExp}(\theta, \frac{1}{\mu_{\text{ucb}}})$ and $\hat{D}^{\text{lcb}} \leftarrow \operatorname{ShiftedExp}(\theta, \frac{1}{\mu_{\text{lcb}}})$ be shifted exponential distributions with shift parameter $\theta$ and scales $\frac{1}{\mu_{\text{ucb}}}$ and $\frac{1}{\mu_{\text{lcb}}}$ respectively.
                \STATE \textbf{---Utility Transformation and Fair Cap Computation---}
                \STATE Let $\hat{D}^\alpha_{r, \pi(x)} \leftarrow \log(\operatorname{Percentile}_\alpha(\hat{D}^{\text{lcb}}))$ be a LCB estimate of $D^{\alpha}_{r, \pi(x)}.$  
                \STATE Define utility distribution $\hat{U} \leftarrow u_{\kappa}( \log (\hat{D}^{\text{ucb}}) )$ where $\kappa = \hat{D}^\alpha_{r, \pi(x)}.$
                \STATE Compute fair-cap value $\tau$ for $(\hat{U}, c)$.
                \IF{$u_{\kappa}(M) \geq \tau$}
                    \STATE \textbf{break}
                \ENDIF
            \ENDIF
        \ENDWHILE
        \STATE \textbf{Return} the response with reward $M$
    \end{algorithmic}
\end{algorithm}
%

\textbf{Implementation Efficiency.}
The algorithm can be implemented with negligible overhead relative to generation costs. A priority queue maintains the median and tail statistics with $O(\log n)$ updates and $O(1)$ queries, while streaming updates eliminate redundant computation.
Key distributional operations are closed-form. For example, the $\alpha$-percentile of the shifted exponential distribution is computed in $O(1)$ time from an analytical formula.
Fair-cap computation requires solving $\mathbb{E}[\max(v - \tau, 0)] = c$ for the threshold $\tau$. We approximate the expectation via a Riemann sum with $\sim$5000 intervals. Empirically, this achieves under 1\% relative error. The subroutine executes over 100 times per second\footnote{Measured on a single core of an AMD EPYC 7513 32-Core Processor.}, enabling real-time adaptation during generation.
In practice, the overhead of adaptive stopping is negligible as LLM and reward model forward passes dominate runtime.

\subsection{Target Acceptance Rate Variant}
Algorithm~\ref{alg:adaptive_best_of_n} requires specifying the utility $B$ and cost $c$, which may be difficult to estimate in practice. To address this, we provide an alternative formulation that instead targets a desired acceptance rate.
Rather than computing the fair-cap value from utility–cost tradeoffs, this variant sets a target acceptance rate $\tau_{\text{target}} \in [0,1]$ that encodes the desired quality level relative to the acceptance threshold. For example, $\tau_{\text{target}}=0.9$ seeks responses nearly as good as benchmark ones, while $\tau_{\text{target}}=1$ requires fully benchmark responses.
The algorithm proceeds identically to Algorithm~\ref{alg:adaptive_best_of_n} except that the stopping condition is now fixed: it halts when $\operatorname{AR}_{\hat{D}^{\alpha}_{r, \pi(x)}}(M) \geq \tau_{\text{target}}$\footnote{In this variant, we estimate $\hat{D}^{\alpha}_{r,\pi}(x)$ using $\hat{D}^{\text{ucb}}$. Otherwise, the algorithm tends to stop prematurely by overestimating the acceptance rate of the maximum sample.}. This formulation is useful in settings where quality requirements are clear but utilities are hard to quantify, for instance, when “good enough” responses are well-defined but the value of marginal improvements is ambiguous.


\section{Experimental Results}
\label{sec:experiment}






\subsection{Experimental Setup}

We evaluate our adaptive Best-of-$N$ algorithm on 100 prompts from \dataalpaca\ \citep{dubois2023alpacafarm} and 100 from \datahh\ \citep{hh-rlhf}. We use four LLM  (\llmgemma, \llmllama, \llmmistral, \llmqwen) and two reward models (\rmfsfair, \rmmistral; \citealp{rm-mistral-1,rm-mistral-2}). This yields 1,600 generation profiles ($2$ datasets $\times$ $100$ prompts $\times$ $4$ LLMs $\times$ $2$ reward models). For each profile, we generate 960 responses, compute rewards, and randomize response–reward orderings 100 times to remove ordering effects. We always fix the max utility $B = 1$, and only vary the cost $c.$

We benchmark against non-adaptive Best-of-$N$ across three tasks: (1) \emph{profit optimization}, comparing utility–cost tradeoffs; (2) \emph{win rate analysis}, under fixed compute budgets; and (3) \emph{efficiency gains}, measuring compute savings at target quality levels. More experimental results are in Appendix \ref{sec:further_experiments}.

\subsection{Experiment 1: Profit Optimization}
We evaluate how well our adaptive algorithm maximizes profit (utility minus total cost) relative to the best non-adaptive strategy. We consider four cost-to-utility ratios: ${0.002, 0.001, 0.0004, 0.0002}$.
Figure~\ref{fig:cost_analysis} reports results for \llmmistral\ with the \rmmistral\ reward model. The adaptive algorithm (red) either closely approximates or outperforms the profit envelope defined by the best non-adaptive strategies (blue/green), automatically achieving this performance without knowing the optimal $N$ in advance. Across all 1,600 generation profiles, the adaptive method outperforms the generator-dependent best non-adaptive algorithm in nearly all cases, demonstrating clear superiority (Figure~\ref{fig:cost_compare_all}).

\begin{figure}
    \centering
    \includegraphics[width=0.9\textwidth]{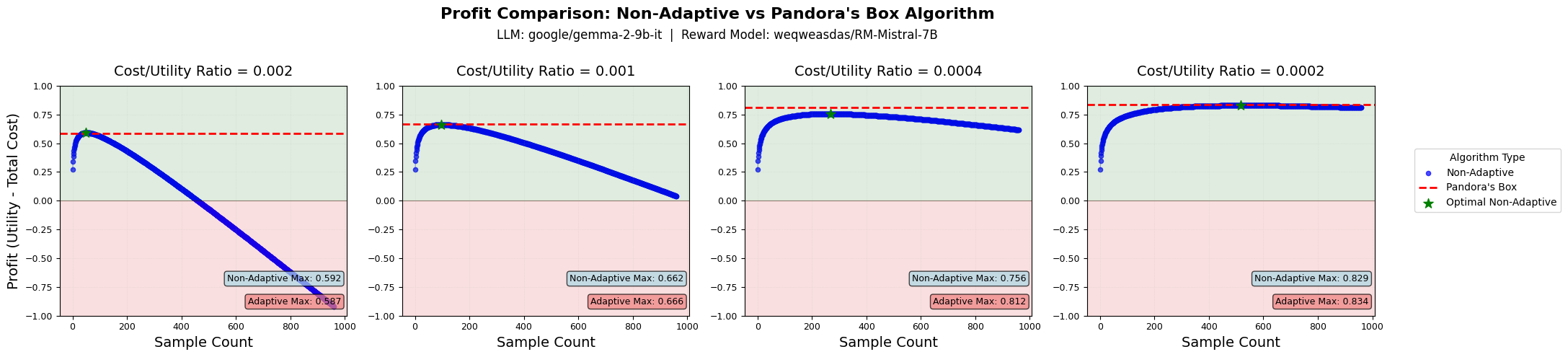}
    \caption{Our algorithm (red) matches optimal non-adaptive performance across varying cost ratios.}
    \label{fig:cost_analysis}
\end{figure}

\subsection{Experiment 2: Win Rate Under Fixed Budget}
We compare our adaptive algorithm with non-adaptive Best-of-$N$ under equal computation budgets. For each prompt $x$, LLM $\pi$, and cost $c \in [10^{-5}, 10^{-3}]$, we: (1) run the adaptive algorithm on 100 random orderings and record the average sample count $\bar{n}_{\pi,x,c}$, (2) run non-adaptive Best-of-$N$ with $N = \bar{n}_{\pi,x,c}$ on the same orderings, and (3) compare maximum rewards, awarding half credit for ties.
Figure~\ref{fig:win_rate} shows the results. The adaptive algorithm consistently outperforms non-adaptive Best-of-$N$, with win rates exceeding 54\% across most cost settings. Gains are largest when costs are low (permitting more samples) or budgets are higher, where adaptive stopping better exploits variation across prompts. Figure~\ref{fig:win_rate_all} confirms these results across different datasets and reward models.
\begin{figure}
\centering
\includegraphics[width=0.85\textwidth]{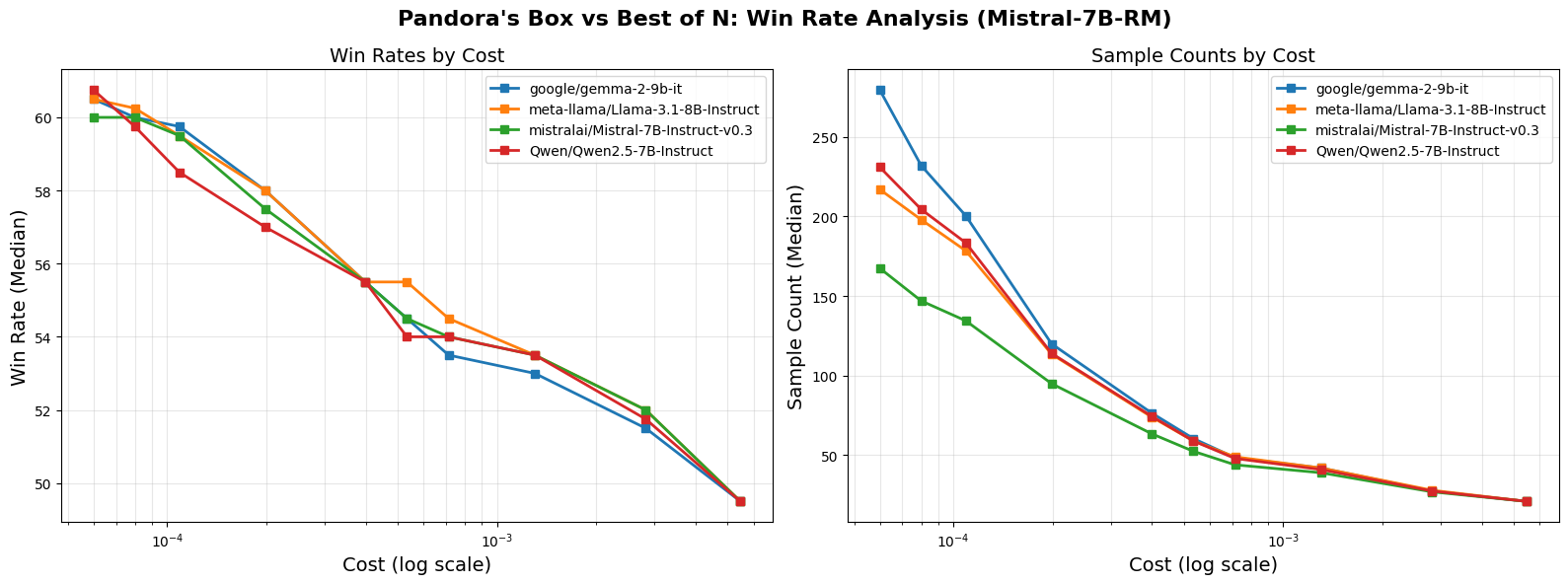}
\caption{Win rate of the adaptive algorithm compared to non-adaptive Best-of-$N$. Adaptive stopping leverages computation more effectively, particularly at lower costs.}
\label{fig:win_rate}
\end{figure}

\subsection{Experiment 3: Efficiency at Target Quality Levels}

We next evaluate the target acceptance rate variant, which allows users to directly specify a desired quality level. For target rates $\tau \in [0.60,1]$ and each configuration (LLM $\pi$, prompt $x$, target $\tau$), we: (1) measure the adaptive algorithm’s average acceptance rate $\bar{a}_{\pi,x,\tau}$ and sample count $\bar{n}_{\pi,x,\tau}$; (2) identify the non-adaptive $N^{\star}$ that achieves the same $\bar{a}_{\pi,x,\tau}$; and (3) compute the efficiency gain 
\[
\text{SaveRatio}_{\pi,x,\tau} = \frac{N^{\star} - \bar{n}_{\pi,x,\tau}}{N^{\star}}.
\]
Figure~\ref{fig:acceptance_rate} shows that the adaptive algorithm both tracks the target quality (right) and yields substantial savings in sample counts (left). For acceptance rates $0.75+$, it consistently reduces sampling by $15$--$35\%$ relative to non-adaptive methods. Savings increase monotonically with stricter targets, from $\sim$15\% up to $\sim$35\%, reflecting more effective use of learned tail information at higher quality levels.  

\begin{figure}
    \centering
    \includegraphics[width=0.85\textwidth]{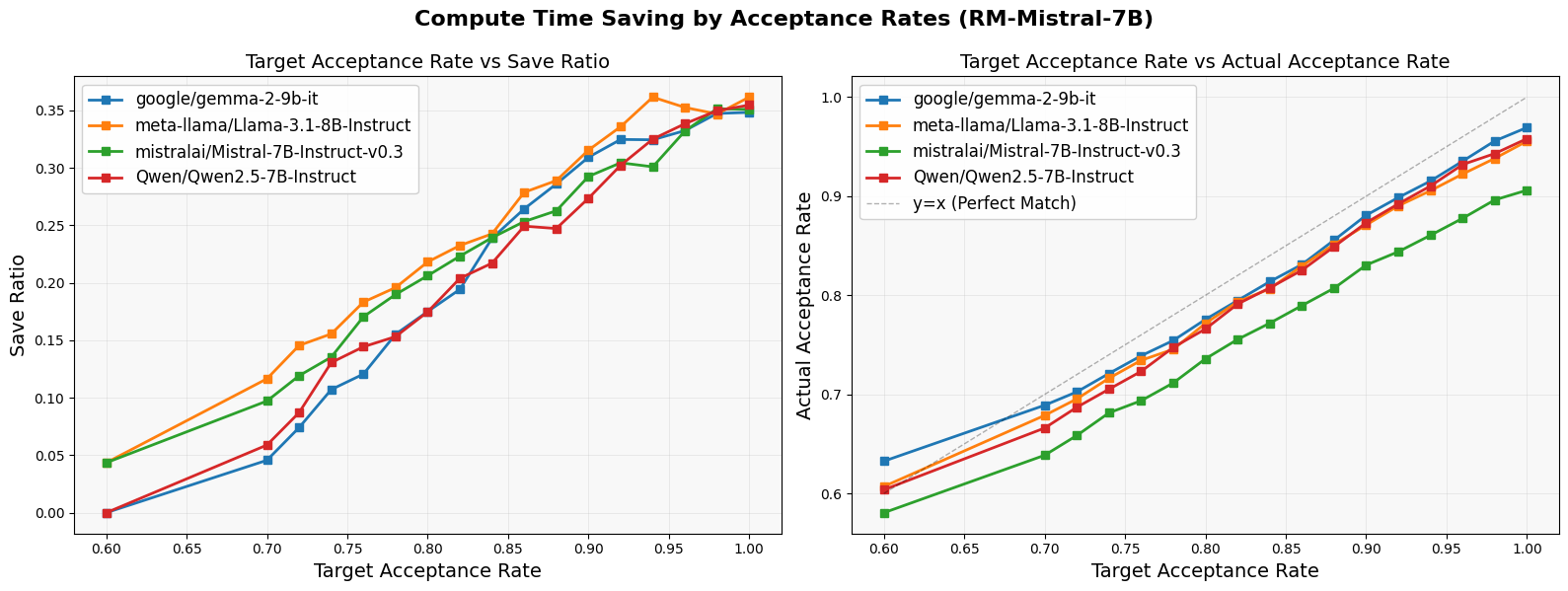}
    \caption{Adaptive algorithm achieves specified target acceptance rates while saving 15--35\% of samples compared to non-adaptive Best-of-$N$.}
    \label{fig:acceptance_rate}
\end{figure}
\section{Discussion}
We highlight several directions for future investigation. \textbf{(1) Multi-model inference:} Although we consider a single LLM in this work, the Pandora’s Box framework extends naturally to ensembles of models, where each model corresponds to a box type with its own cost–quality profile. An open question is whether adaptive algorithms can automatically route queries across models, reducing the need for hand-designed cascades \citep{yue2023large}.  \textbf{(2) Tree search and reasoning:} Approaches such as tree-of-thought \citep{yao2023tree} and Monte Carlo tree search \citep{feng2023alphazero} also involve sequential explore–exploit trade-offs at each decision point. Optimal stopping may help formalize when to expand or backtrack in such settings, potentially improving efficiency relative to existing heuristics.


\bibliography{iclr2026_conference}
\bibliographystyle{iclr2026_conference}

\appendix
\section{Disclosure of LLM Usage}
LLMs were used to aid and polish the writing throughout the paper. In addition, LLMs were used for retrieval and discovery to help write the Related Works section. 

\section{Other Related Works} \label{app:relworks}
\textbf{Extreme Bandits.}
Another related line of work is the \emph{extreme bandit problem}, which optimizes the maximum observed reward rather than cumulative reward \citep{streeter_asymptotically_nodate,cicirello_max_nodate,bhatt_extreme_2021}. This is closely aligned with best-of-$N$ sampling, where one selects the highest-quality output among multiple candidates. However, most extreme bandit formulations assume repeated rounds of play, while inference requires efficient decision-making in a single prompt. Our framework adapts these insights to one-shot inference with explicit cost--quality tradeoffs.

\textbf{Cascading, Routing, and Hybrid Strategies.}
Beyond stopping and sampling, researchers have explored \emph{cascaded or routed inference} to reduce cost. \citet{chen2024model} escalate from cheaper to larger models only when necessary, using self-testing to decide. \citet{mohammadshahi2024routoo} learn to route prompts among multiple models, balancing cost and performance. These strategies complement our focus by showing that adaptive allocation can occur across models as well as within a single model’s sampling process.





\section{UCB Pandora's Box Algorithm}

Algorithm \ref{alg:ucb_pandora_single} provides the exact pseudo-code for the algorithm outlined in Section \ref{sec:pandoraucb}.

\begin{algorithm}
    \caption{UCB Pandora's Box Algorithm}
    \label{alg:ucb_pandora_single}
    
    \textbf{Input:} Distribution family $\F$, sampling cost $c$.
    
    \textbf{Parameter:} Confidence level parameter $\delta > 0$.
    
    \begin{algorithmic}[1]
        \STATE Initialize $S = \emptyset$ (set of observed sample values).
        \STATE Initialize $m = -\infty$ (maximum value seen so far).
        \STATE Initialize $\tau^{+} = M_0$ (a large initial upper confidence bound for the fair cap).
        \WHILE{$M < \tau^{+}$}
            \STATE Query sample and obtain $v \sim D$.
            \STATE $S \leftarrow S \cup \{v\}$.
            \STATE $M \leftarrow \max\{M, v\}$.
            \STATE Update $\tau^{+}$: Compute the UCB for the fair-cap value $\tau$ of $(D, c)$, based on $S$, $\F$, and the confidence parameter $\delta$.
        \ENDWHILE
        \STATE \textbf{Return:} $M$.
    \end{algorithmic}
\end{algorithm}

\section{Missing Figures} \label{app:missfig}
\begin{figure}[H]
    \centering
    
    \includegraphics[width=\textwidth]{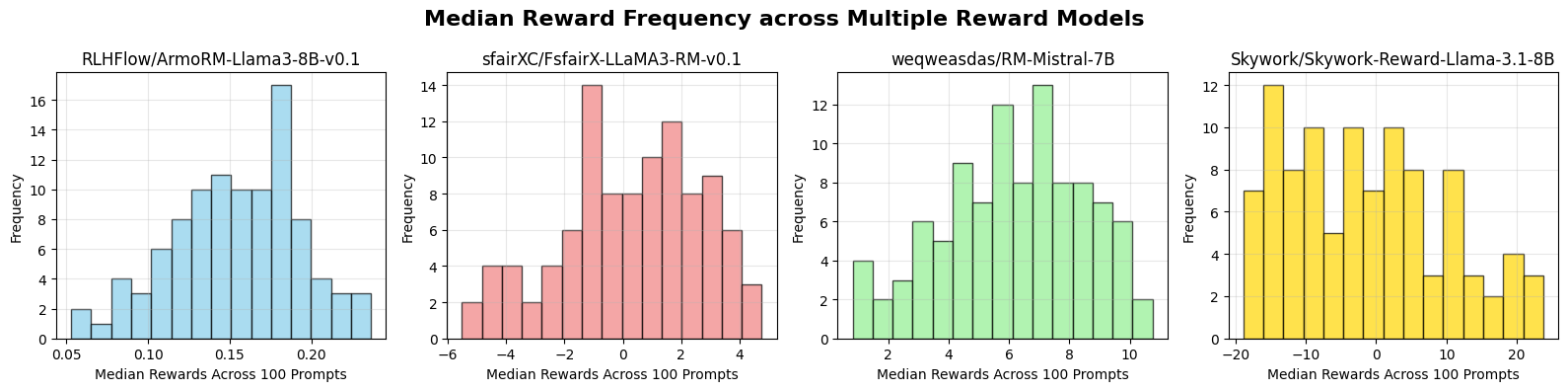}

    \caption{
    Median rewards across $960$ generations for $100$ prompts from the AlpacaEval dataset. 
    }
    \label{fig:frequency}
\end{figure}

We find that the median reward across 960 generations can vary significantly across prompts. 

\section{Missing Proofs and Theoretical Results} \label{app:missproof}
\subsection{Proof of Theorem \ref{thm:main_thm}}
In this section, we include all missing proofs and helper lemmas needed to prove Theorem \ref{thm:main_thm}. The following helper lemmas will be useful. 

\begin{lemma} \label{lem:helper1}  Let $D$ be any distribution, $c > 0$ be the cost value for each sample, and $\tau = \tau(D, c)$ be the fair-cap value according to Definition \ref{def:faircap}. For every $\sigma > 0$, we have that 
$$\mathbb{E}_{v \sim D}\left[[v - (\tau + \sigma)]_{+} \right] \geq c - \sigma \cdot (1 - F(\tau)),$$
where $F$ denotes the \emph{CDF} of $D.$
\end{lemma}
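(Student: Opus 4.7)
The plan is to control the difference between the two positive parts $[v-\tau]_+$ and $[v-(\tau+\sigma)]_+$ pointwise, then take expectations and use the defining identity $\mathbb{E}_{v\sim D}[[v-\tau]_+]=c$ from Definition~\ref{def:faircap}.

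First I would do a pointwise case analysis on $v$ relative to $\tau$ and $\tau+\sigma$. On $\{v\le \tau\}$ both positive parts vanish and the difference is $0$. On $\{\tau<v\le\tau+\sigma\}$ we have $[v-\tau]_+ - [v-(\tau+\sigma)]_+ = v-\tau \le \sigma$. On $\{v>\tau+\sigma\}$ the telescoping is exact, giving a difference of $\sigma$. In all cases the difference is bounded above by $\sigma\cdot\mathbf 1\{v>\tau\}$, so
\begin{equation*}
[v-\tau]_+ - [v-(\tau+\sigma)]_+ \;\le\; \sigma\cdot \mathbf 1\{v>\tau\}.
\end{equation*}

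Taking expectations over $v\sim D$, the right side becomes $\sigma\,(1-F(\tau))$, while by Definition~\ref{def:faircap} the expectation of $[v-\tau]_+$ is exactly $c$. Rearranging yields
\begin{equation*}
\mathbb{E}_{v\sim D}\bigl[[v-(\tau+\sigma)]_+\bigr] \;\ge\; c - \sigma\,(1-F(\tau)),
\end{equation*}
which is the claim. There is no real obstacle: the only subtlety is noticing that the envelope $\sigma\cdot\mathbf 1\{v>\tau\}$, rather than the cruder $\sigma$, is what produces the desired $1-F(\tau)$ factor. A one-line alternative is to write $[v-\tau]_+ = [v-(\tau+\sigma)]_+ + \min(\sigma,[v-\tau]_+)$ and bound the second summand by $\sigma\cdot\mathbf 1\{v>\tau\}$ before taking expectations.
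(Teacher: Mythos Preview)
Your proof is correct and follows essentially the same approach as the paper: both arguments bound the difference $[v-\tau]_+ - [v-(\tau+\sigma)]_+$ by $\sigma$ on the event $\{v\ge\tau\}$ (and zero off it), then take expectations and use the fair-cap identity. The paper phrases this via a conditional expectation on $\{v\ge\tau\}$ together with $[v-\tau-\sigma]_+\ge v-\tau-\sigma$, while you do an explicit three-case pointwise analysis, but the content is identical.
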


\begin{proof} Observe the following sequence of inequalities. 
\begin{align*}
\mathbb{E}_{v \sim D}\left[[v - (\tau + \sigma)]_{+} \right] - c &= \mathbb{E}_{v \sim D}\left[[v - (\tau + \sigma)]_{+} \right] - \mathbb{E}_{v \sim D}\left[[v - \tau]_{+} \right] \\
&= \mathbb{P}_{v \sim D}\left[ v \geq \tau \right]\mathbb{E}_{v \sim D}\left[[v - \tau - \sigma]_{+} - (v-\tau) | v \geq \tau\right]\\
&\geq \mathbb{P}_{v \sim D}\left[ v \geq \tau \right]\mathbb{E}_{v \sim D}\left[v - \tau - \sigma - (v-\tau) | v \geq \tau\right]\\
&= (1 - F(\tau)) \cdot (-\sigma).
\end{align*}
Rearranging completes the proof. 
\end{proof}

\begin{lemma} \label{lem:helperprob} Let $D$ be any distribution, $c > 0$ be the cost value for each sample and  $\tau = \tau(D, c)$ be the fair-cap value according to Definition \ref{def:faircap}.  Then, for every $n \in \mathbb{N}$ and $\sigma > 0$ we have that 
\begin{multline*}
    \mathbb{P}_{v_{1:n-1} \sim D^{n-1}}\left[\max\{v_1, \dots, v_{n-1}\} \in [\tau, \tau + \sigma] \right] \leq F^{n-1}(\tau + \sigma) - F^{n-1}(\tau).
\end{multline*}
\end{lemma}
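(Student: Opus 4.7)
The plan is a one-line consequence of the standard formula for the distribution of the maximum of i.i.d.\ samples. First, I would observe that since $v_1,\dots,v_{n-1}$ are i.i.d.\ draws from $D$ with CDF $F$, independence gives
$$\mathbb{P}\!\left[\max_{i\le n-1} v_i \le t\right] \;=\; \prod_{i=1}^{n-1}\mathbb{P}[v_i\le t] \;=\; F(t)^{n-1} \;=\; F^{n-1}(t),$$
so the random variable $M_{n-1}:=\max_{i\le n-1} v_i$ has CDF $F^{n-1}$.

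Next, the mass that this CDF assigns to the half-open interval $(\tau,\tau+\sigma]$ is immediately $F^{n-1}(\tau+\sigma)-F^{n-1}(\tau)$. The event $\{M_{n-1}\in[\tau,\tau+\sigma]\}$ differs from $\{M_{n-1}\in(\tau,\tau+\sigma]\}$ only by the atom $\mathbb{P}[M_{n-1}=\tau]$. In the continuous settings to which the paper applies the lemma, in particular the exponential family used in Corollary~\ref{cor:expgap}, $D$ has no atom at $\tau$, so this atom has zero mass and the stated bound holds as an equality.

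I do not anticipate any real obstacle. The proof reduces to a direct CDF computation combined with independence of the $v_i$, and the only mildly delicate point is handling the left endpoint at $\tau$, which is resolved by the continuity of $F$ in the relevant applications (or, equivalently, by reading the interval as the half-open $(\tau,\tau+\sigma]$ for the downstream use in Theorem~\ref{thm:main_thm}).
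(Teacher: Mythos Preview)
Your proposal is correct and follows essentially the same approach as the paper: compute the CDF of $M_{n-1}=\max_{i\le n-1}v_i$ as $F^{n-1}$ using independence, then read off the interval probability. If anything, you are slightly more careful than the paper about the possible atom at $\tau$; the paper simply writes $\mathbb{P}[M_{n-1}\in[\tau,\tau+\sigma]] = \mathbb{P}[M_{n-1}\le\tau+\sigma]-\mathbb{P}[M_{n-1}\le\tau]$ without comment.
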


\begin{proof} Fix $n \geq 1$ and let $M_{n-1} = \max\{v_{1}, \dots, v_{n-1}\}.$ Observe that $\mathbb{P}\left[M_{n-1} \leq x\right] = F^{n-1}(x)$ where $F$ is the CDF of $D$ because $v_{1:n-1}$ are iid draws. Noting that $$\mathbb{P}\left[M_{n-1} \in [\tau, \tau + \sigma]\right] = \mathbb{P}\left[M_{n-1} \leq \tau + \sigma \right] - \mathbb{P}\left[M_{n-1} \leq \tau \right]$$ completes the proof. 
\end{proof}

\begin{lemma} \label{lem:helper2} Let $D$ be any distribution, $c > 0$ be the cost value for each sample, and $\tau = \tau(D, c)$ be the fair-cap value according to Definition \ref{def:faircap}.  Then, for every $n \in \mathbb{N}$ and $\sigma > 0$, we have that

$$\mathbb{E}_{v_{1:n} \sim D^n}\left[\Delta_n \mathbf{1}_{B_n} \right] \geq -\sigma(1  - F(\tau))\mathbb{P}\left[B_n\right]$$
where 
$$\Delta_n := [v_n - \max\{v_{1:n-1}\}]_{+} - c,$$ and $B_n$ is the event that 
$$\max\{v_{1:n-1}\} \in [\tau, \tau + \sigma].$$
\end{lemma}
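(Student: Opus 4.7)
\begin{proofsketch}
The plan is to condition on the event $B_n$ together with the realization of the running maximum $M_{n-1} := \max\{v_{1:n-1}\}$, and then exploit the independence of $v_n$ from $v_{1:n-1}$ to reduce the inner expectation to a one-sample quantity that Lemma~\ref{lem:helper1} already controls.

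First I would use the tower property to write
\begin{align*}
\mathbb{E}\left[\Delta_n \mathbf{1}_{B_n}\right]
= \mathbb{E}\left[\mathbf{1}_{B_n}\, \mathbb{E}\!\left[[v_n - M_{n-1}]_{+} - c \,\middle|\, v_{1:n-1}\right]\right].
\end{align*}
Because $v_n \sim D$ independently of $v_{1:n-1}$, the inner conditional expectation equals $\mathbb{E}_{v \sim D}\!\left[[v - M_{n-1}]_{+}\right] - c$, which is a deterministic function of $M_{n-1}$.

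Next, I would use the monotonicity of $m \mapsto [v - m]_{+}$ in $m$: on the event $B_n$ we have $M_{n-1} \le \tau + \sigma$, hence
\begin{align*}
\mathbb{E}_{v \sim D}\!\left[[v - M_{n-1}]_{+}\right] \;\ge\; \mathbb{E}_{v \sim D}\!\left[[v - (\tau + \sigma)]_{+}\right].
\end{align*}
Now Lemma~\ref{lem:helper1} directly gives $\mathbb{E}_{v \sim D}[[v - (\tau+\sigma)]_{+}] \ge c - \sigma (1 - F(\tau))$, and subtracting $c$ yields a pointwise (on $B_n$) lower bound of $-\sigma(1 - F(\tau))$ on the inner conditional expectation.

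The final step is simply to take expectations and pull out the constant: the inequality above is deterministic on $B_n$, so
\begin{align*}
\mathbb{E}\left[\Delta_n \mathbf{1}_{B_n}\right] \;\ge\; -\sigma(1 - F(\tau))\, \mathbb{P}[B_n],
\end{align*}
which is the desired bound. There is no serious obstacle here: the only thing to be careful about is keeping the independence structure clean when conditioning, so that Lemma~\ref{lem:helper1} applies to the fresh draw $v_n$ rather than to a correlated quantity. Everything else is just monotonicity of the positive-part function combined with the defining equation of the fair cap.
\end{proofsketch}
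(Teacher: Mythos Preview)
Your proof is correct and follows essentially the same approach as the paper: condition on $v_{1:n-1}$, use independence of $v_n$ together with the monotonicity of $m \mapsto [v-m]_+$ to replace $M_{n-1}$ by $\tau+\sigma$ on $B_n$, and then invoke Lemma~\ref{lem:helper1}. The paper's version is terser but the logical structure is identical.
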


\begin{proof} (of Lemma \ref{lem:helper2})
It suffices to prove the lower bound $$\mathbb{E}\left[\Delta_n|v_{1:n-1}\right] \geq -\sigma (1-F(\tau))$$ on the event that $B_n$ occurs. Since $v_n$ is independent of $v_{1:n-1}$, we have that 
$$\mathbb{E}\left[\Delta_n|v_{1:n-1}\right] \geq \mathbb{E}\left[[v_n - (\tau + \sigma)]_{+} \right] - c \geq -\sigma \cdot (1 - F(\tau)),$$
where the last inequality follows by Lemma \ref{lem:helper1}.
\end{proof}

We are now ready to prove Theorem \ref{thm:main_thm}. 
\begin{proof} (of Theorem \ref{thm:main_thm}) Let $c > 0$ be the sampling cost and $\F$ be any family of distributions that admits an anytime-valid upper confidence bound $\tau^{+}_{\delta}(n, v_{1:n})$ with width function $\sigma_{\delta, \tau}(n)$. Fix some $D \in \F$ and consider the iid sequence $v_1, v_2 \dots \sim D.$ let $\tau = \tau(D, c)$ be the fair-cap value of $(D, c)$ and $F$ denote the CDF of $D.$ For every $\delta \in (0, 1)$, let $E_{\delta}$ denote the event in Definition \ref{def:csfaircap}.

For $n \geq 1$, write $M_{n-1} = \max\{v_{1:n-1}\}$ and define
$$\Delta_n := [v_n - M_{n-1}]_{+} - c$$ and 
$$B_n := \{M_{n-1} \in [\tau, \tau + \sigma_{\delta, \tau}(n)]\}.$$ 
On the event $E_{\delta}$, the two policies disagree only when event $B_n$ occurs. Hence, 
$$\mathbb{E}\left((R_W - R_U)\mathbf{1}_{E_{\delta}}\right] \leq \mathbb{E}\left[\sum_{n=1}^{\infty} - \Delta_n \mathbf{1}_{B_n}\mathbf{1}_{E_{\delta}}\right].$$
We aim to use Fubini's theorem to swap the expectation with the sum. To do so, we need to show that 
$$\sum_{n=1}^{\infty} \mathbb{E}\left[|-\Delta_n \mathbf{1}_{B_n} \mathbf{1}_{E_{\delta}}| \right] < \infty.$$
To see this, note that 
$$|-\Delta_n \mathbf{1}_{B_n} \mathbf{1}_{E_{\delta}}| \leq (c + [v_n - M_{n-1}]_{+})\mathbf{1}_{B_n} \leq (c + [v_n -\tau]_{+})\mathbf{1}_{B_n},$$
where the last inequality follows from the fact that on $B_n$, we have that $M_{n-1} \geq \tau.$ Thus, it suffices to upper bound 
$$\mathbb{E}\left[(c + [v_n - \tau]_{+})\mathbf{1}_{B_n}\right].$$
By conditioning on $M_{n-1}$ and using the tower law,  we can write
\begin{align*}
\mathbb{E}\left[(c + [v_n - \tau]_{+})\mathbf{1}_{B_n}\right] &= \mathbb{E}\left[\mathbb{E}\left[(c + [v_n - \tau]_{+})\mathbf{1}_{B_n}|M_{n-1}\right] \right]\\
&= \mathbb{E}\left[\mathbb{E}\left[c + [v_n - \tau]_{+}|M_{n-1}\right]\mathbf{1}_{B_n} \right]\\
&= \mathbb{E}\left[\mathbb{E}\left[c + [v_n - \tau]_{+}\right]\mathbf{1}_{B_n} \right]\\
&= \mathbb{E}\left[2c\mathbf{1}_{B_n} \right]\\
&= 2c \cdot \mathbb{P}\left[B_n\right]\\
&= 2c \cdot \left(F^{n-1}(\tau + \sigma_{\delta, \tau}(n)) - F^{n-1}(\tau)\right)\\
&\leq 2c \cdot F^{n-1}(\tau + \sigma_{\delta, \tau}(n)),
\end{align*}
where the third equality is by independence of $v_n$ and $M_{n-1}$, the fourth equality is by definition of $\tau$, and the sixth equality by Lemma \ref{lem:helperprob}. Hence, 
$$\sum_{n=1}^{\infty} \mathbb{E}\left[|-\Delta_n \mathbf{1}_{B_n} \mathbf{1}_{E_{\delta}}| \right] \leq 2c\sum_{n=1}^{\infty}F^{n-1}(\tau + \sigma_{\delta, \tau}(n)) < \infty,$$
since $\sigma_{\delta, \tau}(n) \rightarrow 0 $ as $n \rightarrow \infty.$ Accordingly, by Fubini's, theorem, we have that 
$$\mathbb{E}\left((R_W - R_U)\mathbf{1}_{E_{\delta}}\right] \leq \sum_{n=1}^{\infty} \mathbb{E}\left[- \Delta_n \mathbf{1}_{B_n}\mathbf{1}_{E_{\delta}}\right].$$
By Lemma \ref{lem:helper2} we have that 
$$\sum_{n=1}^{\infty} \mathbb{E}\left[-\Delta_n \mathbf{1}_{B_n}\mathbf{1}_{E_{\delta}}\right] \leq \sum_{n=1}^{\infty} \sigma_{\delta, \tau}(n)(1  - F(\tau))\mathbb{P}\left[B_n\right].$$
Finally, by Lemma \ref{lem:helperprob}, we have that 
$$\mathbb{P}\left[B_n\right] =  F^{n-1}(\tau + \sigma_{\delta, \tau}(n)) - F^{n-1}(\tau),$$
which completes the proof. 
\end{proof}

\subsection{Proof of Corollary \ref{cor:expgap}} \label{app:proofcorexp}In this section, we include all missing proofs and helper lemmas needed to prove Corollary \ref{cor:expgap}.

In order to use Theorem \ref{thm:main_thm}, we need to specify several things. First, Lemma\ref{exp:faircapexp} computes the fair cap value for an Exponential distribution with parameter $\lambda.$

\begin{lemma}[Fair-cap value for Exponential distribution] \label{exp:faircapexp} Let $D_{\lambda}$ be the exponential distribution with parameter $\lambda > 0.$ Then, for every $c > 0$, the fair cap value $\tau$ associated with $(D, c)$ is $\frac{\log\left(\frac{1}{\lambda c} \right)}{\lambda}$
\end{lemma}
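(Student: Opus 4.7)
The plan is to apply the definition of the fair-cap value directly. By Definition \ref{def:faircap}, I need $\tau$ such that $\mathbb{E}_{v \sim D_\lambda}[[v-\tau]_+] = c$, and since $D_\lambda$ is a simple parametric distribution I can evaluate the left-hand side in closed form and solve.

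First, I would write out the expectation as
\[
\mathbb{E}_{v \sim D_\lambda}[[v-\tau]_+] = \int_\tau^\infty (v-\tau)\,\lambda e^{-\lambda v}\,dv,
\]
assuming (as is clearly the case in the relevant regime) that $\tau \ge 0$. Then I would perform the substitution $u = v - \tau$, which pulls out a factor of $e^{-\lambda \tau}$ and reduces the remaining integral to $\int_0^\infty u\, \lambda e^{-\lambda u}\, du$, the mean of an $\mathrm{Exp}(\lambda)$ random variable, namely $1/\lambda$. (Equivalently, one can invoke the memoryless property: conditional on $v \ge \tau$, the excess $v-\tau$ is again $\mathrm{Exp}(\lambda)$, and one multiplies by $\mathbb{P}[v \ge \tau] = e^{-\lambda \tau}$.) The result is
\[
\mathbb{E}_{v \sim D_\lambda}[[v-\tau]_+] = \frac{e^{-\lambda \tau}}{\lambda}.
\]

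Finally, setting $e^{-\lambda \tau}/\lambda = c$ and solving gives $e^{-\lambda \tau} = \lambda c$, hence $\tau = \frac{1}{\lambda}\log\!\left(\frac{1}{\lambda c}\right)$, which matches the claimed expression. I should briefly note that this $\tau$ is the unique solution, since $\mathbb{E}_{v \sim D_\lambda}[[v-\tau]_+]$ is strictly decreasing in $\tau$, so the fair-cap value is well-defined.

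There is no real obstacle here: the entire argument is a one-line integral computation followed by solving a logarithmic equation. The only minor thing to watch is ensuring $\tau \ge 0$ so that the lower limit of integration is correct, which holds precisely when $\lambda c \le 1$; this is guaranteed in the regime $\lambda \in (0, 1/(ce)]$ used in the surrounding corollary, but the closed-form formula for $\tau$ in fact holds for all $\lambda, c > 0$ (for $\lambda c > 1$ the fair-cap value is simply negative, and the integral computation extends by integrating from $\tau$ to $\infty$ still, using that the density is supported on $[0,\infty)$, which gives the same answer provided $\tau \le 0$ is interpreted by integrating from $0$ instead; I would remark on this only if needed).
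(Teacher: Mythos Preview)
Your proof is correct and follows essentially the same approach as the paper: compute $\mathbb{E}_{v\sim D_\lambda}[[v-\tau]_+]=\int_\tau^\infty (v-\tau)\lambda e^{-\lambda v}\,dv=e^{-\lambda\tau}/\lambda$, set this equal to $c$, and solve for $\tau$. Your added remarks on the substitution, uniqueness via monotonicity, and the $\tau\ge 0$ regime are helpful elaborations but do not change the argument (note, though, that your parenthetical about the $\lambda c>1$ case is not quite right---for $\tau<0$ the integral becomes $1/\lambda-\tau$ rather than $e^{-\lambda\tau}/\lambda$, so the closed form does not extend; fortunately this case is irrelevant here).
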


\begin{proof} For $v \sim \operatorname{Exp}(\lambda)$, we have 
$$\mathbb{E}\left[[v - \tau]_{+}\right] = \int_{\tau}^{\infty}(x - \tau) \lambda e^{-\lambda x} = \frac{e^{-\lambda \tau}}{\lambda}.$$
Setting this equal to $c > 0$ and solving for $\tau$ completes the proof.
\end{proof}

Next, we need to derive an anytime valid upper confidence bound on the fair-cap value for the Exponential distribution with parameter $\lambda.$ Note that an upper confidence bound on $1/\lambda$ gives an upper confidence bound on $\tau$ when $D_{\lambda}$ is an exponential distribution with parameter $\lambda$. Hence, as a first step, Lemma \ref{lem:csexp} derives an anytime-valid confidence sequence for the mean of the Exponential distribution. 

\begin{lemma}[Anytime-valid Confidence Sequence for Exponential distribution] \label{lem:csexp} Fix $\lambda > 0$ and let $D_{\lambda}$ be an Exponential distribution with parameter $\lambda.$ Define
$$r_{\delta}(n) := \min\Biggl\{\frac{1}{2}, \sqrt{\frac{6}{n}\log\left(\frac{2n(n+1)}{\delta}\right)}\Biggl\}.$$
Then, for every $\delta \in (0, 1)$, we have that 
$$\mathbb{P}_{v_{1:\infty} \sim D^{\infty}}\left[\forall n \geq 1: \mu \in [\hat{\mu}_n(1-r_{\delta}(n)), \hat{\mu}_n(1+r_{\delta}(n))] \right] \geq 1 - \delta$$
where $\mu = \frac{1}{\lambda}$ and $\hat{\mu}_n = \frac{1}{n}\sum_{i=1}^n v_i.$
\end{lemma}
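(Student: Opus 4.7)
The plan is to combine a Chernoff-type concentration bound for the sample mean at each fixed $n$ with an anytime-valid union bound over $n$, using the telescoping allocation $\delta_n := \delta/(n(n+1))$, which satisfies $\sum_{n\geq 1}\delta_n = \delta$ by the identity $\sum_{n\geq 1} 1/(n(n+1)) = 1$.

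First, I would exploit the closed-form moment generating function of the Exponential distribution. Writing $\mu = 1/\lambda$, the normalized samples $v_i/\mu$ are i.i.d.\ $\mathrm{Exp}(1)$, so $n\hat{\mu}_n/\mu \sim \mathrm{Gamma}(n,1)$. Starting from $\mathbb{E}[e^{t v_i/\mu}] = 1/(1-t)$ for $t<1$ and optimizing the Chernoff exponent, one obtains the two-sided rate-function bounds
\[
\mathbb{P}[\hat{\mu}_n/\mu \geq 1+\epsilon] \leq e^{-n(\epsilon - \log(1+\epsilon))}, \qquad \mathbb{P}[\hat{\mu}_n/\mu \leq 1-\epsilon] \leq e^{-n(-\epsilon - \log(1-\epsilon))}.
\]
A Taylor expansion shows that both rate functions are lower-bounded by $\epsilon^2/6$ on $\epsilon \in [0, 1/2]$, giving the symmetric estimate $\mathbb{P}[|\hat{\mu}_n/\mu - 1| \geq \epsilon] \leq 2 e^{-n\epsilon^2/6}$.

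Second, I would convert this deviation statement into the targeted form $\mu \in \hat{\mu}_n[1-r, 1+r]$. The failure event $\mu \notin \hat{\mu}_n[1-r, 1+r]$ is equivalent to $\hat{\mu}_n/\mu \notin [1/(1+r), 1/(1-r)]$; invoking the one-sided rate bounds with $\epsilon = r/(1-r)$ and $\epsilon = r/(1+r)$ respectively, together with $\epsilon - \log(1+\epsilon) \geq \epsilon^2/(2(1+\epsilon))$ and $-\epsilon - \log(1-\epsilon) \geq \epsilon^2/2$, yields a uniform per-step bound of the form $2 e^{-n r^2/c}$ for an absolute constant $c \leq 6$ on $r \in [0,1/2]$. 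Solving $2 e^{-n r^2/6} \leq \delta_n$ produces exactly $r = \sqrt{(6/n)\log(2n(n+1)/\delta)}$, the unclamped branch of the $\min$ defining $r_\delta(n)$. A union bound over $n \geq 1$ then gives
\[
\mathbb{P}\!\left[\exists\,n\geq 1:\ \mu \notin \hat{\mu}_n[1-r_\delta(n),\, 1+r_\delta(n)]\right] \leq \sum_{n=1}^\infty \delta_n = \delta,
\]
which is the claimed anytime-valid confidence sequence.

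The main obstacle I anticipate lies in justifying the cap at $1/2$: when $\sqrt{(6/n)\log(2n(n+1)/\delta)} > 1/2$, which occurs for the smallest values of $n$, the quadratic lower bound $\epsilon^2/6$ on the Chernoff exponent degrades and cannot be applied directly at $r = 1/2$. Closing this regime requires either invoking the full rate functions $\epsilon - \log(1+\epsilon)$ and $-\epsilon - \log(1-\epsilon)$ without the quadratic relaxation, or equivalently a sub-exponential Bernstein-type inequality exploiting the sub-exponential parameter of $\mathrm{Exp}(\lambda)$, and then verifying that the resulting per-step failure still fits within the $\delta_n$ budget. The (somewhat loose) constant $6$ in $r_\delta$ is chosen to leave slack for this handover between the quadratic and sub-exponential tail regimes.
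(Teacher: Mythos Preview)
Your approach is essentially the paper's: Chernoff/MGF tail bounds for $S_n=\sum_i v_i/\mu \sim \mathrm{Gamma}(n,1)$, a quadratic lower bound $\ge s^2/6$ on the rate functions valid on $(0,1/2]$, and the anytime union bound with $\delta_n=\delta/(n(n+1))$. The one simplification the paper makes is in parameterization: it applies Markov's inequality directly to the events $\{\hat\mu_n/\mu \ge 1/(1-s)\}$ and $\{\hat\mu_n/\mu \le 1/(1+s)\}$, which are \emph{already} the complements of $\mu\in\hat\mu_n[1-s,1+s]$, so no change of variables is needed. This produces rate functions $\psi_+(s)=s/(1-s)+\log(1-s)$ and $\psi_-(s)=\log(1+s)-s/(1+s)$, and the paper checks $\psi_\pm(s)\ge s^2/6$ on $(0,1/2]$ by elementary calculus. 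Your detour through the ``standard'' deviations $\{\hat\mu_n/\mu \gtrless 1\pm\epsilon\}$ followed by the substitutions $\epsilon=r/(1\mp r)$ is equivalent but adds an unnecessary layer.

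On the clamp at $1/2$: your concern is legitimate, and the paper does \emph{not} resolve it either. After defining $r_\delta(n)$ as the $\min$, the paper simply asserts the per-$n$ failure bound $\le \delta_n$; but that conclusion follows from the quadratic rate bound only when the square-root branch is active. When the clamp binds (small $n$), neither the quadratic relaxation nor the exact values $\psi_\pm(1/2)$ deliver $\le \delta_n$---indeed at $n=1$ the interval $[\hat\mu_1/2,\,3\hat\mu_1/2]$ covers $\mu$ with probability only $e^{-2/3}-e^{-2}\approx 0.38$, independent of $\delta$. So you are not missing a trick from the paper; this is a shared gap, and the lemma as stated would need a warm-start (ignore the first $n_0$ rounds) or an unclamped radius to be literally correct.
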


\begin{proof} Let $\lambda > 0$ and $D_{\lambda}$ be an exponential distribution with parameter $\lambda.$ Let $v_1, v_2, \dots$ denote a sequence of iid draws from $D_{\lambda}.$ Define $w_i = v_i / \mu$, where $\mu = 1/\lambda.$  Note that $w_i \sim \operatorname{Exp}(1).$ Fix some $n \geq 1$ and define $S_n := \sum_{i=1}^n w_i \sim \operatorname{Gamma}(n, 1)$. Note that $\frac{\hat{\mu}_n}{\mu} = \frac{S_n}{n}.$ For any $s \in (0, 1)$ we will show that 
$$\mathbb{P}\left[\frac{\hat{\mu}_n}{\mu} \geq \frac{1}{1-s} \right] \leq e^{-n \psi_{+}(s)}$$
and 
$$\mathbb{P}\left[\frac{\hat{\mu}_n}{\mu} \leq \frac{1}{1+s} \right] \leq e^{-n \psi_{-}(s)},$$
where
$$\psi_{+}(s) := \frac{s}{1-s} + \log(1-s),$$
and
$$\psi_{-}(s) := \log(1+s) - \frac{s}{1+s}.$$
Starting with the upper tail, by Markov's inequality and the MGF of the $\operatorname{Gamma}(n, 1)$, we have that 
$$\mathbb{P}\left[S_n \geq a\right] = \mathbb{P}\left[e^{sS_n} \geq e^{sa} \right] \leq \frac{\mathbb{E}\left[e^{sS_n} \right]}{e^{sa}} = (1-s)^{-n}e^{-sa}.$$
With $a = \frac{n}{1-s}$, we have that 
$$\mathbb{P}\left[\frac{\hat{\mu}_n}{\mu} \geq \frac{1}{1-s} \right] = \mathbb{P}\left[S_n \geq a\right] \leq \exp\left(-n\left(\frac{s}{1-s} + \log(1-s) \right) \right) = e^{-n \psi_{+}(s)}.$$
For the lower tail, we use the fact that $x \rightarrow e^{-sx}$ is decreasing to get that 
$$\mathbb{P}\left[S_n \leq a \right] = \mathbb{P}\left[e^{-sS_n} \geq e^{-sa} \right] \leq \frac{\mathbb{E}\left[e^{-s S_n} \right]}{e^{-sa}} = (1 + s)^{-n}e^{sa}.$$
Using $a = \frac{n}{1+s}$, gives that 
$$\mathbb{P}\left[\frac{\hat{\mu}_n}{\mu} \leq \frac{1}{1+s} \right] \leq e^{-n \psi_{-}(s)}.$$
Now, we show that for any $s \in (0, 1/2)$, we have that  $\psi_{+}(s) \geq \frac{s^2}{6}$ and  $\psi_{-}(s) \geq \frac{s^2}{6}.$ For $\psi_{+}$, define $h_{+}(s) = \psi_{+}(s) - s^2/6.$ Then, note that 
$$h'_{+}(s) = \frac{s}{(1-s)^2} - \frac{s}{3} = s\left(\frac{1}{(1-s)^2} - \frac{1}{3}\right) \geq \frac{2s}{3} \geq 0.$$
Since $h_{+}(0) = 0$ and $h'_{+} \geq 0$ we have that $h_{+}(s) \geq 0.$
For $\psi_{-}$, we have that 
$$\psi_{-}(s) = \log(1 + s) - \frac{s}{1+s} \geq \left(s - \frac{s^2}{2}\right) - \left(s - \frac{s^2}{1+s}\right) = s^2\left(\frac{1}{1+s} - \frac{1}{2} \right) = \frac{s^2(1-s)}{2(1+s)} \geq \frac{s^2}{6},$$
where the first inequality is due to the fact that $\log(1+s) \geq s - \frac{s^2}{2}$ for $s \in (0, 1)$ and the last inequality uses the fact that $s \leq 1/2.$ We now complete the proof by picking a summable sequence $\delta_n.$ Namely, pick $\delta_n = \frac{\delta}{n(n+1)}$ and note that $\sum_{n\geq1} \delta_n = \delta.$ At time $n \geq 1$, pick 
$$r_{\delta}(n) := \min\Biggl\{\frac{1}{2}, \sqrt{\frac{6}{n}\log\left( \frac{2}{\delta_n}\right)}\Biggl\} = \min\Biggl\{\frac{1}{2}, \sqrt{\frac{6}{n}\log\left(\frac{2n(n+1)}{\delta}\right)}\Biggl\}.$$
Our previous analysis, then gives that 
$$\mathbb{P}\left[\mu \notin [\hat{\mu}_n(1 - r_{\delta}(n)), \hat{\mu}_n(1 + r_{\delta}(n))] \right] \leq \delta_n.$$
Hence, by the union bound, we have that 
$$\mathbb{P}\left[\exists n \geq 1: \mu \notin [\hat{\mu}_n(1 - r_{\delta}(n)), \hat{\mu}_n(1 + r_{\delta}(n))] \right] \leq \sum_{n=1}^{\infty}\delta_n = \delta,$$
which completes the proof. 
\end{proof}

Note that the anytime-valid upper confidence bound on the mean $1/\lambda$ given by Theorem \ref{lem:csexp} can be computed just from the observed rewards. Hence, it applies to \emph{every} distribution in our distribution family $\F.$ With Lemma \ref{lem:csexp} in hand, we can now prove Theorem \ref{thm:csfaircapexp}, which gives an anytime-valid upper confidence bound on the fair-cap value for the family of Exponential distributions. 

\begin{proof} (of Theorem \ref{thm:csfaircapexp}) Let $c > 0$ be the sampling cost and $\F$ be the class of Exponential distribution with parameter $\lambda \in (0, \frac{1}{c \cdot e}].$ Fix a distribution $D_{\lambda} \in \F.$ Let $\tau = \tau(D, c)$ be the fair-cap value and $\mu = \frac{1}{\lambda}$ be the mean of $D_{\lambda}$. Finally, let $\delta \in (0 ,1)$ and consider the iid sequence $v_1, v_2, \dots \sim D_{\lambda}.$ 

By Lemma \ref{lem:csexp}, we have that 
$$\mathbb{P}\left[\forall n \geq 1: \mu \in [ \hat{\mu}_n(1 - r_{\delta}(n)) ,  \hat{\mu}_n(1 + r_{\delta}(n)) \right] \geq 1-\delta.
$$
From Lemma\ref{exp:faircapexp}, we know that for the exponential distribution, its cap value is a monotonic in $\frac{1}{\lambda}$. Hence, with probability $1-\delta$, we have that 
$$\mathbb{P}\left[\forall n \geq 1: \tau \in \left[\hat{\mu}_n(1 - r_{\delta}(n)) \log\left(\frac{\hat{\mu}_n(1 - r_{\delta}(n))}{c}\right), \hat{\mu}_n(1 + r_{\delta}(n)) \log\left(\frac{\hat{\mu}_n(1 + r_{\delta}(n))}{c}\right) \right]\right] \geq 1-\delta.
$$
Take $\tau^{+}_{\delta}(n, v_{1:n}) := \hat{\mu}_n(1 + r_{\delta}(n)) \log\left(\frac{\hat{\mu}_n(1 + r_{\delta}(n))}{c}\right).$ To complete the proof, it suffices to upper bound the difference 
$$\hat{\mu}_n(1 + r_{\delta}(n)) \log\left(\frac{\hat{\mu}_n(1 + r_{\delta}(n))}{c}\right) - \hat{\mu}_n(1 - r_{\delta}(n)) \log\left(\frac{\hat{\mu}_n(1 - r_{\delta}(n))}{c}\right).
$$
Consider the function $g(\mu) = \mu \log(\mu / c).$ Then, by the Mean Value Theorem and the fact that $g^{\prime}(\mu) = 1 + \log\left(\frac{\mu}{c}\right)$, we have that 
\begin{align*}
g(\hat{\mu}_n(1 + r_{\delta}(n))) - g(\hat{\mu}_n(1 - r_{\delta}(n))) &\leq \left(1 + \log\left(\frac{\hat{\mu}_n(1 + r_{\delta}(n))}{c}\right)\right)(2 \hat{\mu}_n r_{\delta}(n))\\
&\leq \left(1 + \log\left(\frac{\hat{\mu}_n(1 + r_{\delta}(n))}{c}\right)\right)\left(\frac{2r_{\delta}(n)\mu}{1-r_{\delta}(n)}\right)\\
&\leq \left(1 + \log\left(\frac{\mu(1 + r_{\delta}(n))}{c(1-r_{\delta}(n))}\right)\right)\left(\frac{2r_{\delta}(n)\mu}{1-r_{\delta}(n)}\right).
\end{align*}
Recall that $r_{\delta}(n) \leq \frac{1}{2}$ by definition. Hence, 
\begin{align*}
\left(1 + \log\left(\frac{\mu(1 + r_{\delta}(n))}{c(1-r_{\delta}(n))}\right)\right)\left(\frac{2r_{\delta}(n)\mu}{1-r_{\delta}(n)}\right) &\leq \left(3 + \log\left(\frac{\mu}{c}\right)\right)\left(4r_{\delta}(n)\mu\right)\\
&= 4 r_{\delta}(n) \cdot \tau \cdot \left(1 + \frac{3}{\log\left(\frac{\mu}{c}\right)}\right)\\
&\leq 16 r_{\delta}(n) \cdot  \tau,
\end{align*}
where the last inequality follows from the assumption that $\mu \geq e \cdot c.$ Hence, altogether, we can take
$$\sigma_{\delta, \tau}(n) = 16 r_{\delta}(n) \cdot \tau,$$
which completes the proof. 
\end{proof}

Finally, we use Theorem \ref{thm:main_thm} to bound the sub-optimality gap for UCB Pandora's Box algorithm for the family $\F$ of Exponential distributions. 

\begin{proof} (of Corollary \ref{cor:expgap}) Let $\F$ be the class of Exponential distribution with parameter $\lambda \in (0, \frac{1}{c \cdot e}].$ Fix a distribution $D_{\lambda} \in \F.$ Let $\tau = \tau(D, c)$ be the fair-cap value and let $\delta \in (0 ,1).$

 Define
$$S := \sum_{n=1}^{\infty} \sigma_{\delta, \tau}(n) (1 - F_{\lambda}(\tau)) \cdot (F^{n-1}_{\lambda}(\tau + \sigma_{\delta, \tau}(n)) - F_{\lambda}^{n-1}(\tau))$$
 By the Mean Value Theorem and the fact that the PDF $f$ is monotonically decreasing, we have that for some $b \in (\tau,  \tau + \sigma_{\delta, \tau}(n))$,
 \begin{align*}
     F_{\lambda}^{n-1}(\tau + \sigma_{\delta, \tau}(n)) - F_{\lambda}^{n-1}(\tau) &= \sigma_{\delta, \tau}(n) \cdot (n-1) \cdot  F_{\lambda}^{n-2}(b) \cdot  f_{\lambda}(b) \\
     & \leq \sigma_{\delta, \tau}(n) \cdot (n-1) \cdot  F_{\lambda}^{n-2}(\tau + \sigma_{\delta, \tau}(n)) \cdot  f_{\lambda}(\tau).
 \end{align*}
Plugging this back in gives that
\begin{align*}
S &\leq \sum_{n=1}^{\infty} \sigma_{\delta, \tau}(n) \cdot (1 - F_{\lambda}(\tau)) \cdot \sigma_{\delta, \tau}(n) \cdot (n-1) \cdot F_{\lambda}^{n-2}(\tau + \sigma_{\delta, \tau}(n)) \cdot f(\tau)\\
&= \sum_{n=1}^{\infty} \sigma_{\delta, \tau}(n)^2 \cdot (1 - F_{\lambda}(\tau))  \cdot (n-1) \cdot F_{\lambda}^{n-2}(\tau + \sigma_{\delta, \tau}(n)) \cdot f_{\lambda}(\tau)\\
&= \sum_{n=1}^{\infty} 256 \cdot \tau^2 r_{\delta}(n)^2 \cdot e^{-\lambda \tau}  \cdot (n-1) \cdot (1 - e^{-\lambda (\tau + \sigma_{\delta, \tau}(n))})^{n-2} \cdot \lambda \cdot e^{-\lambda \tau}
\end{align*}
Now, observe that $r_{\delta}(n)^2 \leq C \frac{\log(n/\delta)}{n}$ for some universal constant $C$. Hence, 
$$S \leq \frac{256 \cdot C\tau^2 \lambda   }{e^{2\lambda \tau }}\sum_{n=1}^{\infty} \log(n/\delta) \cdot (1 - e^{-\lambda (\tau + 16\tau \cdot r_{\delta}(n))})^{n-2}.$$
It suffices to upper bound 
$$\sum_{n=1}^{\infty} \log(n/\delta) \cdot (1 - e^{-\lambda (\tau + 16\tau\cdot r_{\delta}(n))})^{n-2}.$$












Define $A := e^{-\lambda \tau}$ and $q_n := 1 - e^{-\lambda (\tau + 16\tau \cdot r_{\delta}(n))}.$ Then, observe that 
$$q_n =  1  - Ae^{-16\lambda \tau \cdot r_{\delta}(n)}.$$
 We want to find the smallest $N$ such that for all $n \geq N$, we get 
$$16\lambda \tau \cdot  r_{\delta}(n) \leq \log 2.$$
Substituting the definition of $r_{\delta}(n)$ and solving for $n$ gives that we need to set 
$$N \geq \Theta\left(\lambda^2 \tau^2 \log(1/\delta) \right).$$
For this choice of $N$, we have that for all $n \geq N$, 
$$e^{-16 \lambda \tau  \cdot r_{\delta}(n)} \geq \frac{1}{2} \implies q_n \leq 1 - \frac{A}{2}.$$
Thus, for all $n \geq N$, we have that 
$$q^{n-2}_n \leq \left(1 - \frac{A}{2}\right)^{n-2}.$$
Now, split the infinite sum
$$\sum_{n=1}^{\infty} \log(n/\delta) \cdot (1 - e^{-\lambda (\tau + 16\tau \cdot r_{\delta}(n))})^{n-2} = \sum_{n=1}^{\infty} \log(n/\delta) \cdot q_n^{n-2} = \sum_{n < N} \log(n/\delta) \cdot q_n^{n-2} + \sum_{n > N} \log(n/\delta) \cdot q_n^{n-2}.$$
We can trivially bound 
$$\sum_{n < N} \log(n/\delta) \cdot q_n^{n-2} \leq N \log(N/\delta).$$
As for the second sum, we claim that 
$$\sum_{n > N} \log(n/\delta) \cdot q_n^{n-2} \leq \sum_{n=1}^{\infty} \log(n/\delta) \left(1 - \frac{A}{2}\right)^{n-2} \leq O\left(\frac{\log(2/\delta A)}{A}\right) = O\left(\lambda \tau \cdot  e^{\lambda \tau} \cdot \log(1/\delta)\right)$$
To see why the second inequality is true, let $q := 1 - \frac{A}{2}.$ Then, 
$$\sum_{n=1}^{\infty} \log(n/\delta) \left(1 - \frac{A}{2}\right)^{n-2} = \frac{1}{q^2} \sum_{n=1}^{\infty} \log(n/\delta) q^{n}.$$
We will focus on bounding $\sum_n \log(n/\delta) q^n.$ Define $p_n = (1-q)q^{n-1}.$ Note that $p_n$ is the PMF of a geometric distribution $p$ with parameter $1-q$. Then, 
$$\sum_{n=1}^{\infty} \log(n/\delta) q^n = \frac{q}{1-q}\sum_{n\geq 1} p_n \log(n/\delta) = \frac{q}{1-q} \mathbb{E}_{n \sim p}\left[ \log(n/\delta) \right].$$
By Jensen's inequality and the fact that $\log(x/\delta)$ is concave, we have that
$$\mathbb{E}_{n \sim p}\left[ \log(n/\delta) \right] \leq \log\left(\mathbb{E}_{n \sim p}\left[ n/\delta \right] \right) = \log\left(\frac{1}{\delta(1-q)}\right).$$
Hence, we have that 
$$\sum_{n=1}^{\infty} \log(n/\delta)q^n \leq \frac{q}{1-q}\log\left(\frac{1}{\delta(1-q)} \right)$$
and 
$$\frac{1}{q^2} \sum_{n=1}^{\infty} \log(n/\delta) q^{n} \leq \frac{1}{q(1-q)}\log\left(\frac{1}{\delta(1-q)}\right).$$
Since $A \leq 1$, we have that $q \geq \frac{1}{2}$, thus, 
$$\frac{1}{q^2} \sum_{n=1}^{\infty} \log(n/\delta) q^{n} \leq \frac{2}{(1-q)}\log\left(\frac{1}{\delta(1-q)}\right).$$
Plugging in $q = 1 - \frac{A}{2}$ completes the claim that 
$$\sum_{n=1}^{\infty} \log(n/\delta) \left(1 - \frac{A}{2}\right)^{n-2} \leq O\left(\frac{\log(2/\delta A)}{A}\right) = O\left(\lambda \tau \cdot  e^{\lambda \tau} \cdot \log(1/\delta)\right),$$
where the last equality follows from the definition of $A.$ Now, returning to the original proof, we have that 
$$\sum_{n=1}^{\infty} \log(n/\delta) \cdot (1 - e^{-\lambda (\tau + 16\tau \cdot r_{\delta}(n))})^{n-2} \leq \tilde{O}_{\delta, \tau, \lambda}(\lambda^2 \tau^2) + \tilde{O}_{\delta}\left(\lambda \tau \cdot  e^{\lambda \tau} \right).$$
Multiplying by the outer factor gives that 
\begin{align*}S &\leq \frac{256 \cdot C\tau^2 \lambda   }{e^{2\lambda \tau }}\left(\tilde{O}_{\delta, \tau, \lambda}(\lambda^2 \tau^2) + O\left(\lambda \tau \cdot  e^{\lambda \tau} \right) \right)\\
&= \tilde{O}_{\delta}\left(\frac{\lambda^3 \tau^4}{e^{2\lambda \tau}} + \frac{\lambda^2 \tau^3}{e^{\lambda \tau}} \right)\\
&= \tilde{O}_{\delta}\left(\frac{\lambda^2 \tau^3}{e^{\lambda \tau}}\left(1 + \frac{\lambda \tau}{e^{\lambda \tau}} \right) \right)\\
&= \tilde{O}_{\delta}\left(\frac{\lambda^2 \tau^3}{e^{\lambda \tau}} \right)\\
&= \tilde{O}_{\delta}\left(\frac{ (\lambda\tau)^3}{\lambda \cdot  e^{\lambda \tau}} \right)\\
&= \tilde{O}_{\delta}\left(\frac{1}{\lambda}\right)
\end{align*}
The third fifth equality follows from the fact that $\frac{x}{e^{x}} = O(1)$ and  $\frac{x^4}{e^{x}} = O(1)$ respectively. This completes the proof. 
\end{proof}

\section{Additional Experimental Results}
\label{sec:further_experiments}

This appendix provides comprehensive experimental results across all datasets, language models, and reward models. While Section~\ref{sec:experiment} focused on representative examples, here we present the complete evaluation demonstrating the consistency and robustness of our adaptive approach.

\subsection{Experiment 1: Profit Performance Across All Configurations}

We evaluate how consistently our adaptive algorithm matches optimal non-adaptive performance across diverse settings. The profit performance ratio is defined as:
$$\text{Profit Performance Ratio} = \frac{\text{Profit of Adaptive Best-of-N}}{\text{Profit of Best Non-adaptive Best-of-N}}$$
where profit equals utility minus total generation cost. A ratio near 1.0 indicates the adaptive algorithm matches the best possible non-adaptive strategy without requiring hyperparameter tuning.

The results, shown in Figure~\ref{fig:cost_compare_all} for the \dataalpaca\ and \datahh\ datasets with \rmmistral\ and \rmfsfair, confirm the adaptive algorithm's strength. 

The adaptive algorithm outperforms most of the time even the best-tuned generator dependent non-adaptive strategy. This implies that its dynamic behavior is more profitable than any fixed, one-size-fits-all approach.

\begin{figure}[!hbt]
    \centering
    \includegraphics[width=\textwidth]{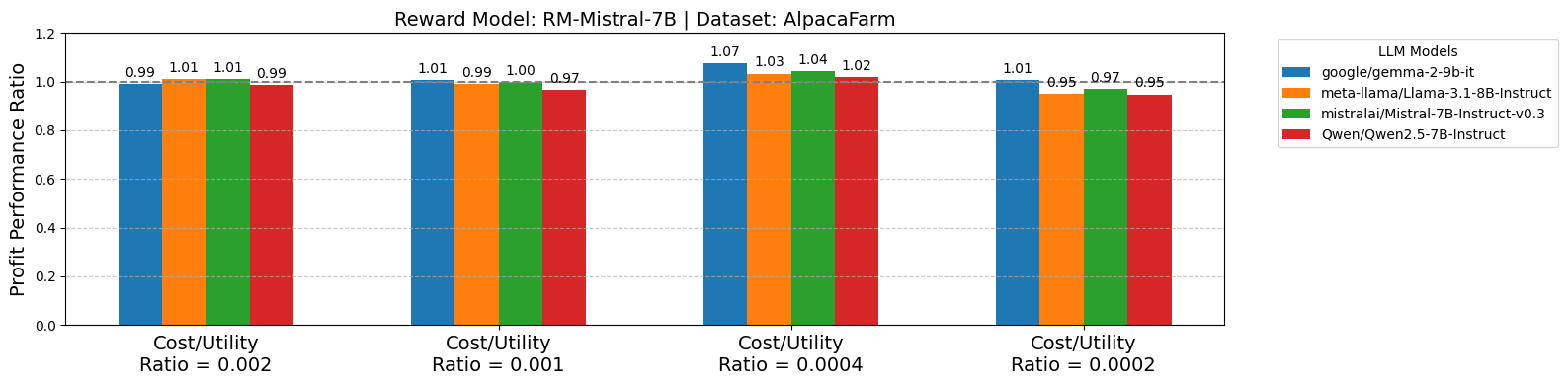}
    \includegraphics[width=\textwidth]{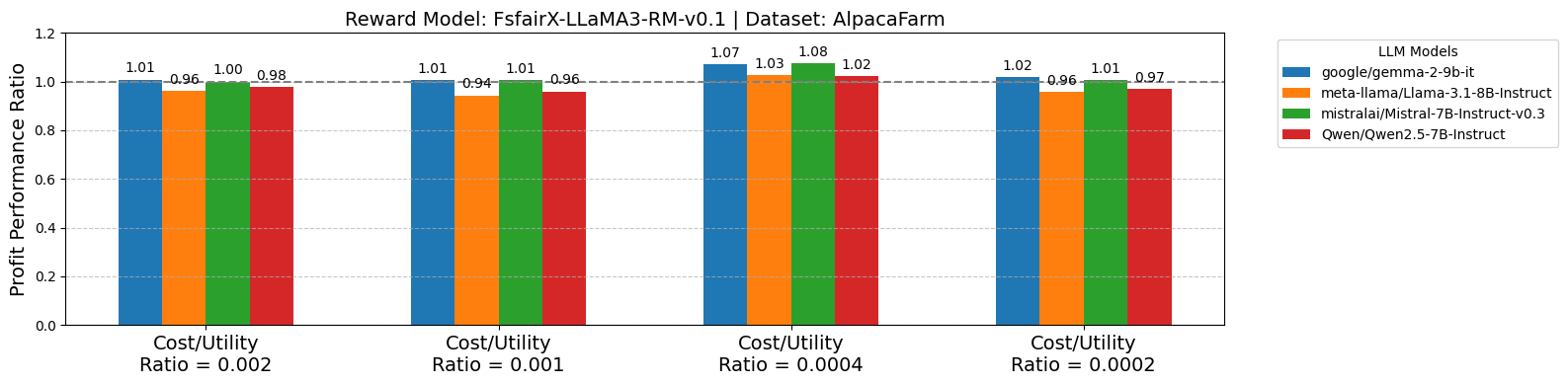}
    \includegraphics[width=\textwidth]{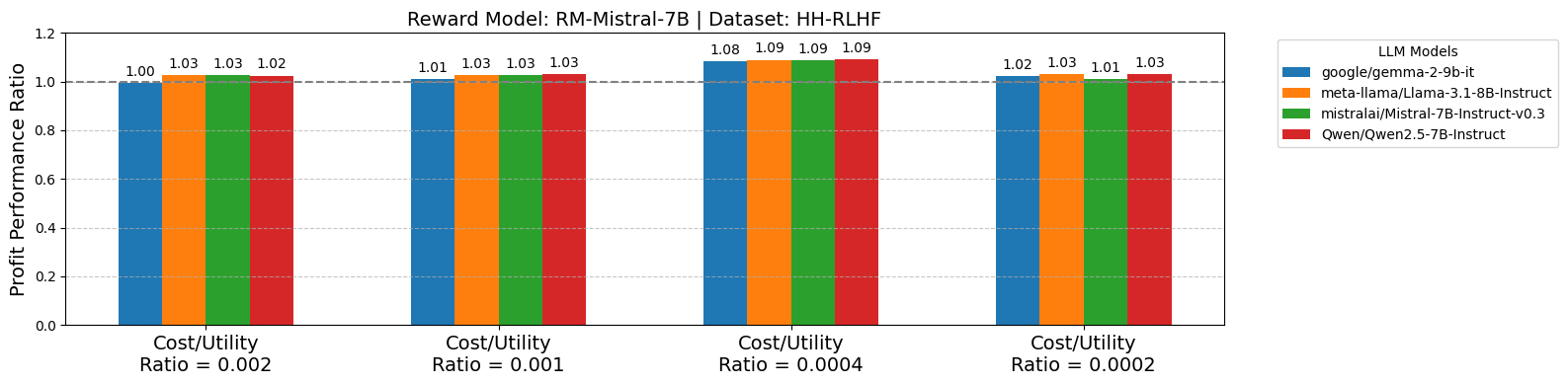}
    \includegraphics[width=\textwidth]{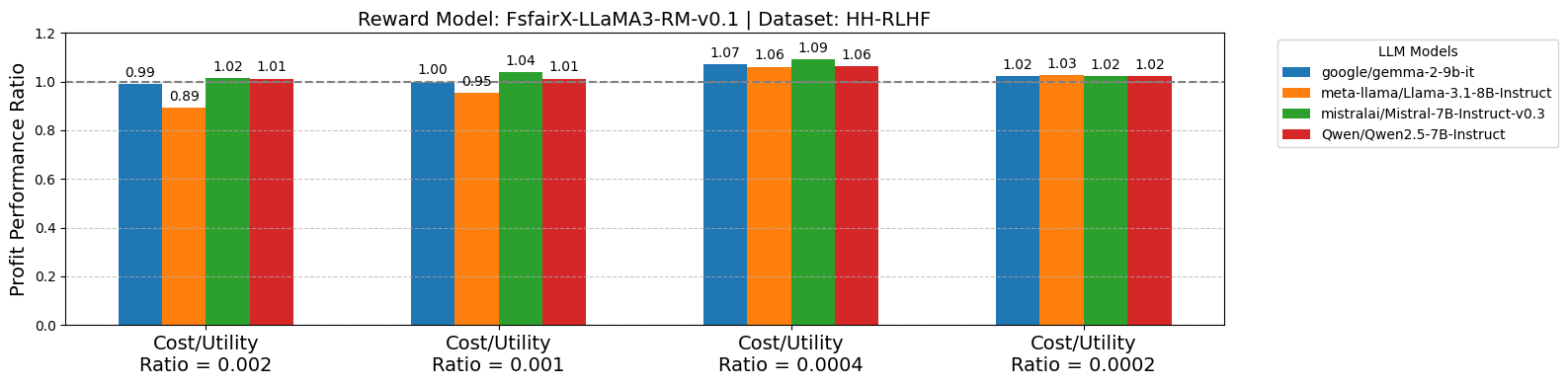}
    \caption{Profit performance ratios for four LLM generators across varying cost/utility ratios. Each panel represents a different dataset-reward model combination, with four cost/utility scenarios per panel. Higher ratios indicate better profit efficiency relative to best non-adaptive algorithm.
    \label{fig:cost_compare_all}
    }
\end{figure}

\subsection{Experiment 2: Win Rate Analysis Across Configurations}

We compare head-to-head performance when adaptive and non-adaptive algorithms use identical computation budgets.

\begin{figure}[!hbt]
    \centering
    \includegraphics[width=0.9\textwidth]{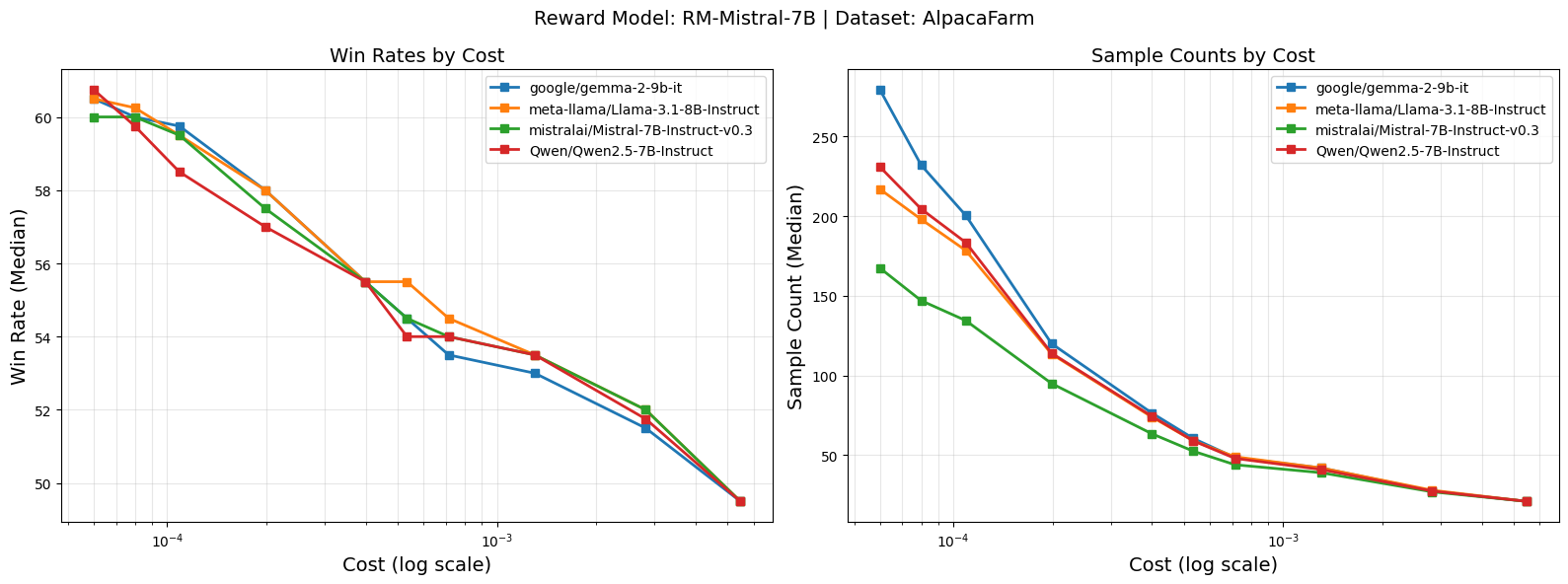}
    \includegraphics[width=0.9\textwidth]{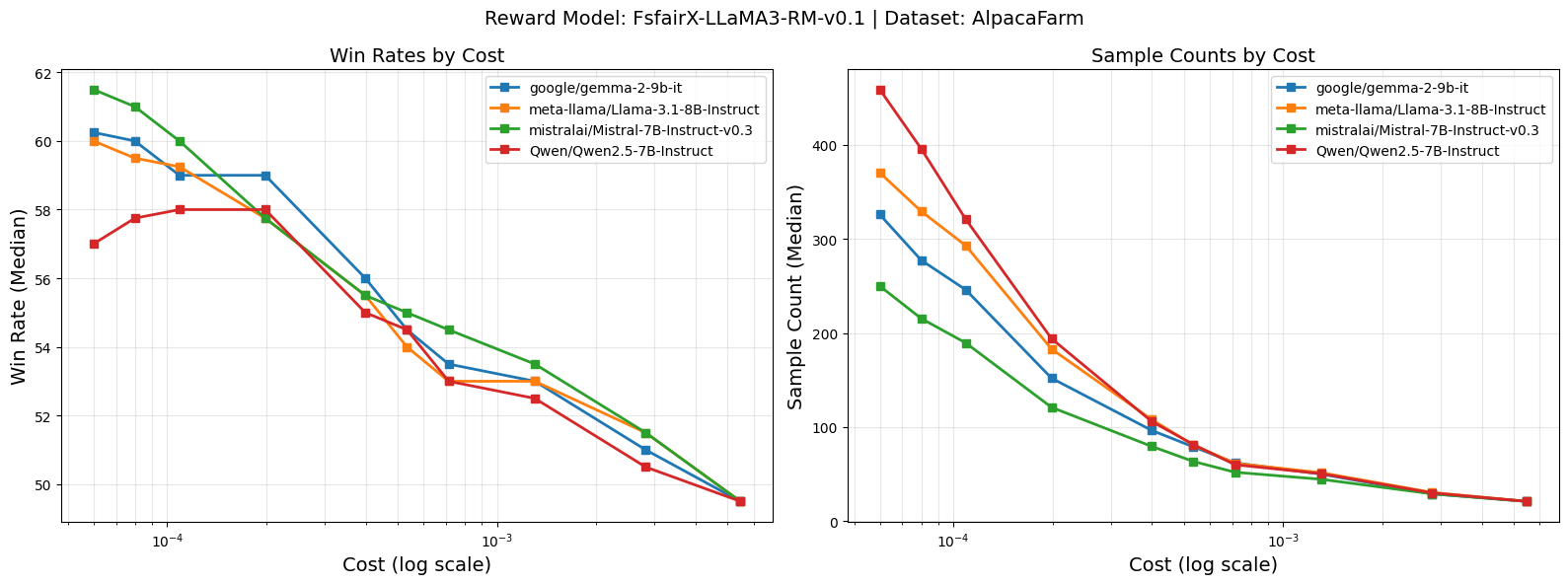}
    \includegraphics[width=0.9\textwidth]{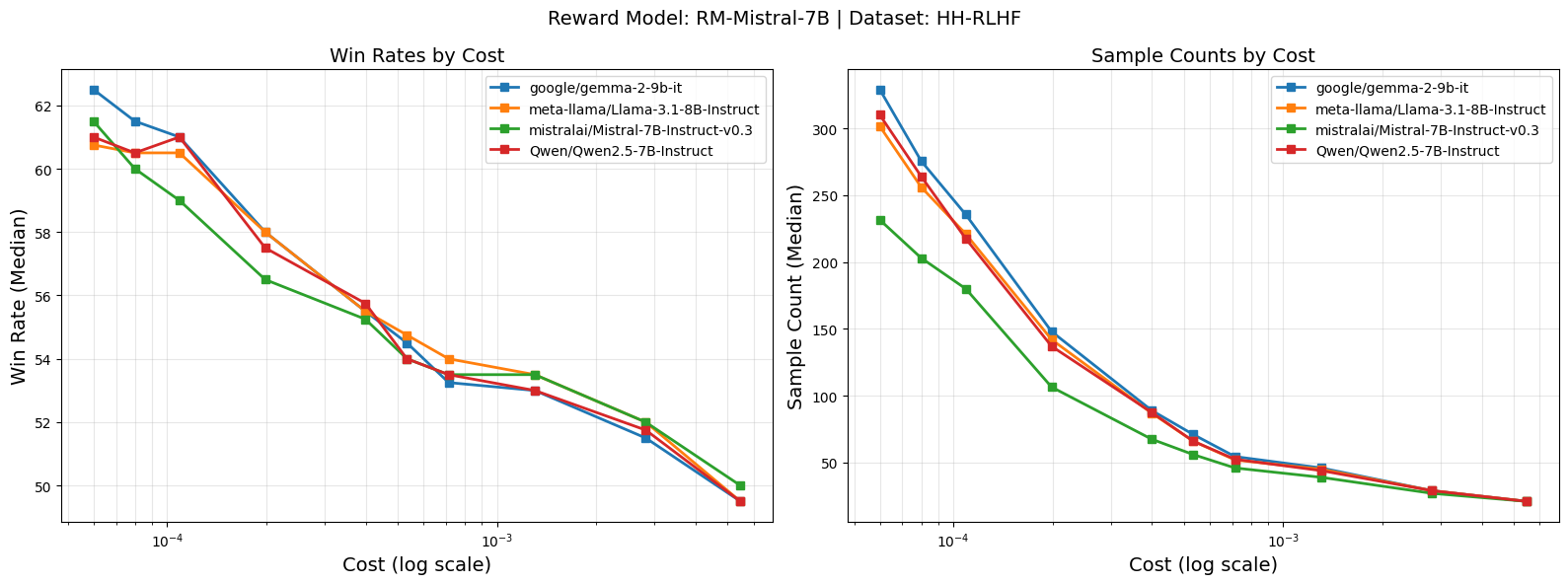}
    \includegraphics[width=0.9\textwidth]{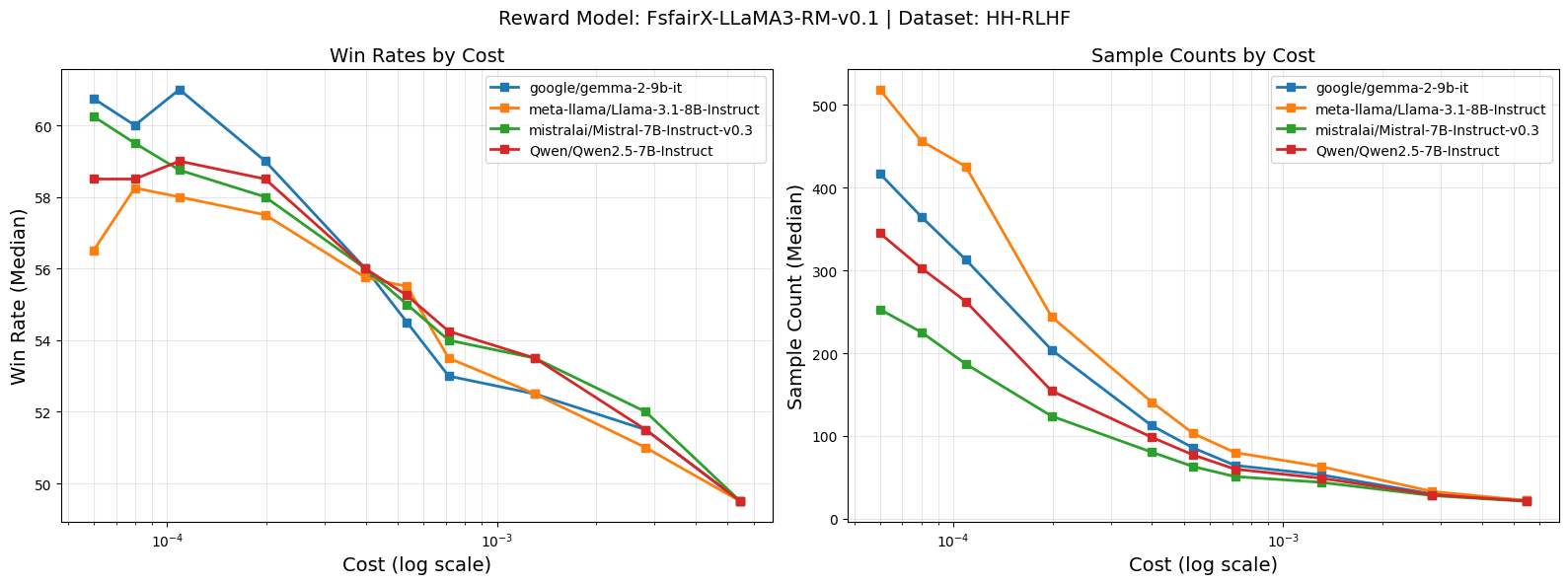}
    \caption{
        Performance evaluation of adaptive versus non-adaptive generation under matched computational budgets. Each row corresponds to a unique reward model-dataset configuration. Left panels present win rates computed from 100 sample permutations where adaptive algorithms compete against non-adaptive baselines. Right panels show mean sample counts (identical for both strategies due to budget matching) averaged over the same 100 permutations. Both metrics are plotted against cost (log scale, $10^{-5}$ to $10^{-2}$) and represent median values across all evaluation prompts.
    }
    \label{fig:win_rate_all}
\end{figure}

Figure~\ref{fig:win_rate_all} reveals consistent patterns across all settings:
\begin{itemize}
    \item \textbf{Budget scaling:} The adaptive algorithm's advantage grows with the available budget. Its win rate increases from around 50\% (at chance) with minimal resources to over 54\% when the budget exceeds 100 samples.
    \item \textbf{Model consistency:} The win rate trends are remarkably consistent across different models. This demonstrates that the algorithm's effectiveness is not tied to a specific model.
    \item \textbf{Dataset effects:} Similarly, the performance curves hold steady across different datasets, proving the algorithm is robust to variations in input data and prompt styles.
\end{itemize}

This consistent advantage across diverse models, datasets, and budgets validates our central hypothesis: the adaptive algorithm succeeds by exploiting the unique reward distribution of each prompt, rather than relying on a specific setup.

\subsection{Experiment 3: Efficiency Gains at Target Quality Levels}

This experiment quantifies the computational efficiency of our adaptive algorithm. We measure how many fewer samples it needs than a non-adaptive approach to reach the same quality, defined by the acceptance rate.

\begin{figure}[!hbt]
    \centering
    \includegraphics[width=0.9\textwidth]{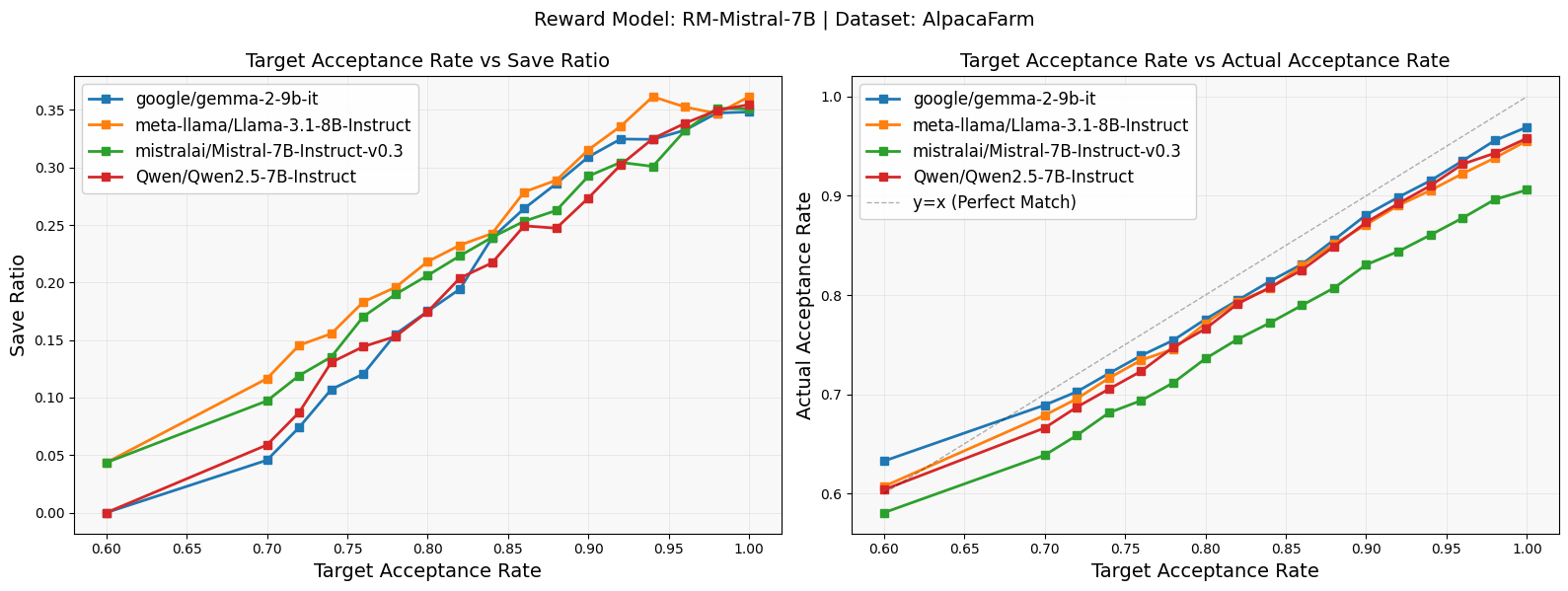}
    \includegraphics[width=0.9\textwidth]{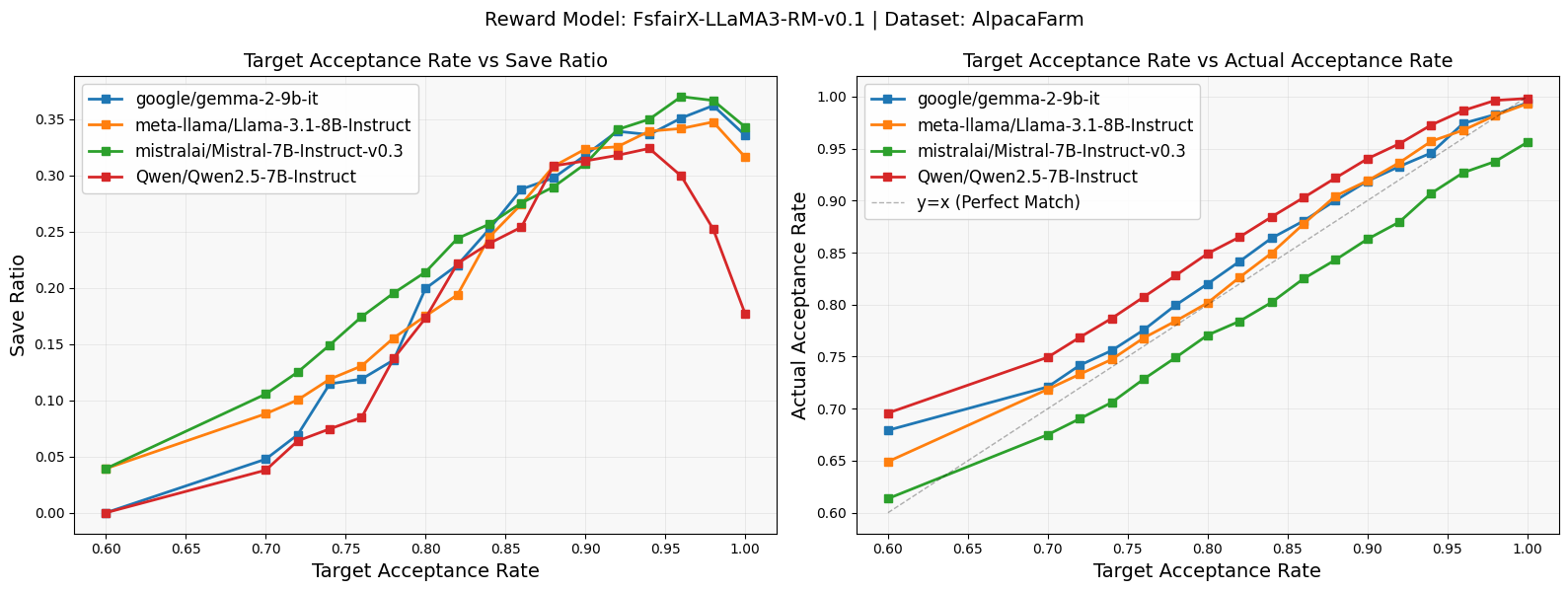}
    \includegraphics[width=0.9\textwidth]{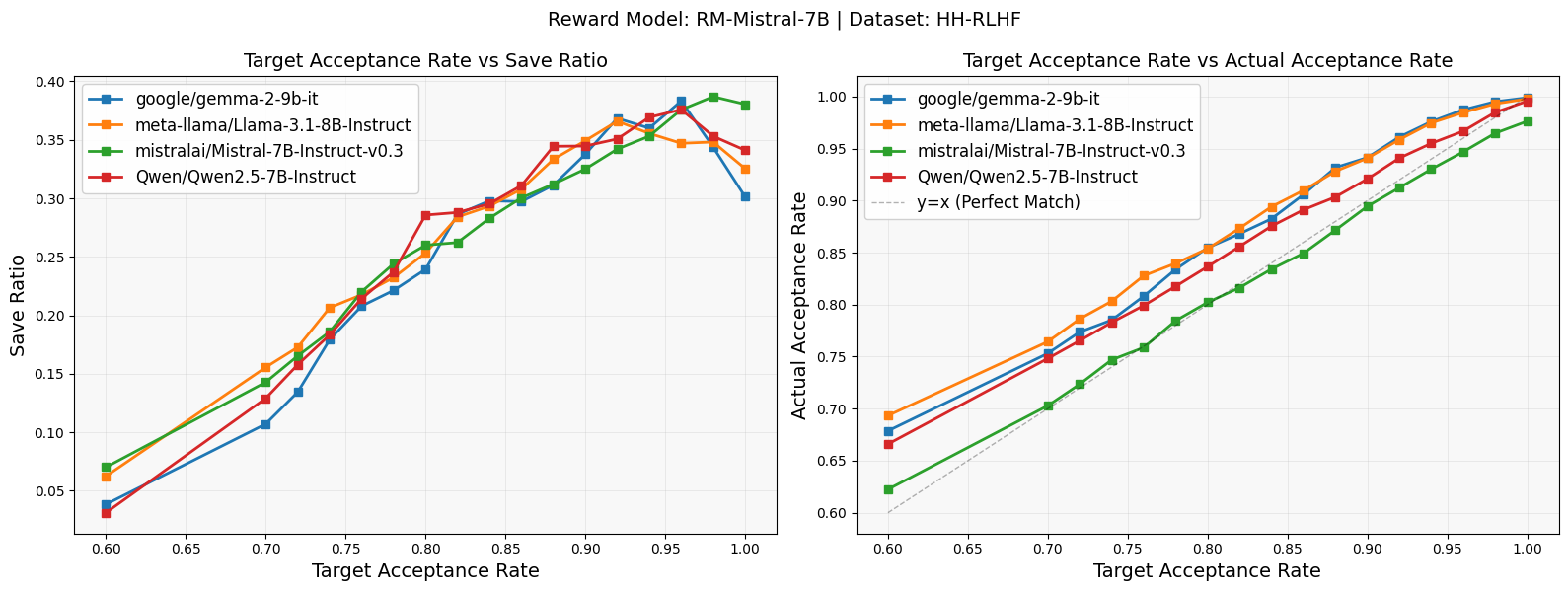}
    \includegraphics[width=0.9\textwidth]{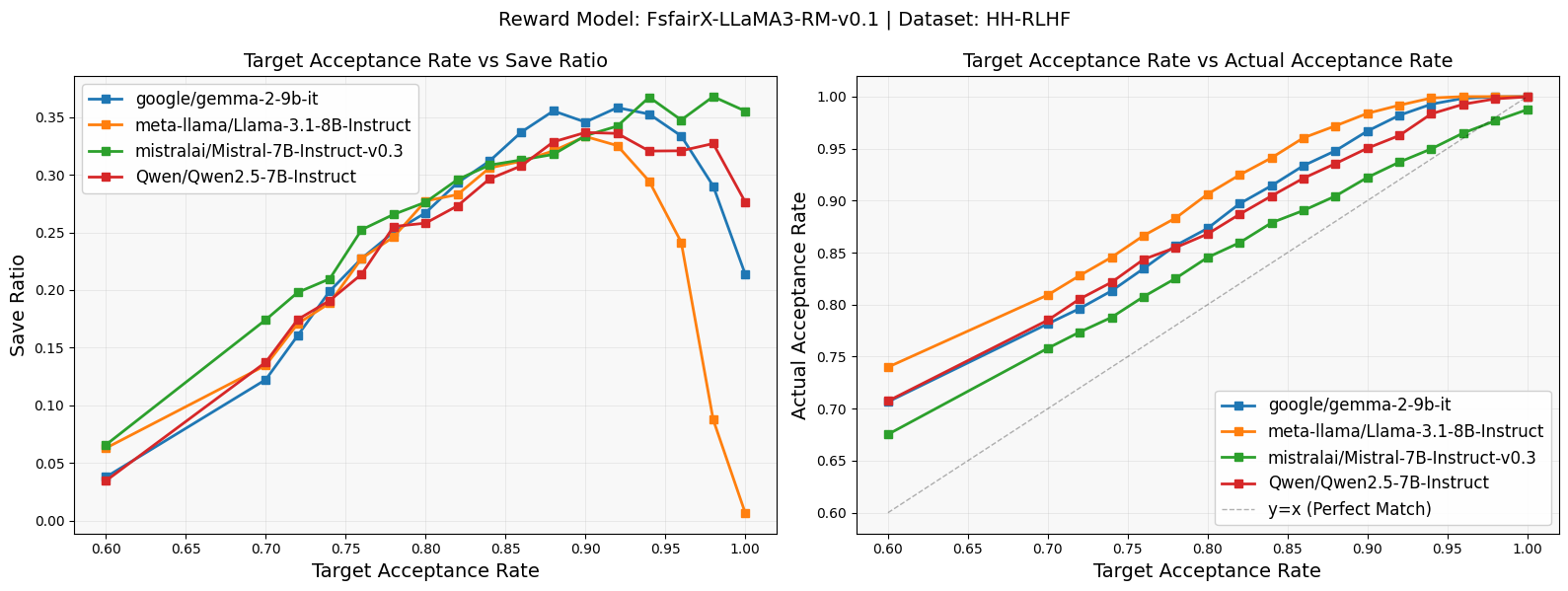}
    \caption{Computational savings from adaptive generation while maintaining acceptance rate equivalence. Each row presents a unique reward model-dataset pairing. Left panels quantify the save ratio—the fraction of samples eliminated by adaptive algorithms compared to non-adaptive methods when both achieve identical average acceptance rates. Specifically, for each target acceptance rate, the adaptive algorithm yields an actual acceptance rate, and the non-adaptive sample count is calibrated to match this actual rate. Right panels show the relationship between target and actual acceptance rates for the adaptive algorithm, illustrating calibration behavior. Save ratios are averaged over 100 generation stream permutations, with all metrics indexed by target acceptance rate (0.6–1.0) and aggregated via median across 100 test prompts.}
    \label{fig:acceptance_ratio_all}
\end{figure}

Figure~\ref{fig:acceptance_ratio_all} demonstrates two critical findings:

\textbf{Right panels (Target vs Achieved Rates):} The adaptive algorithm effectively follows the desired quality targets. While a small gap exists between the target and achieved acceptance rates, the graphs confirm that our algorithm reliably adjusts its behavior to closely approximate the specified quality level across all configurations.

\textbf{Left panels (Compute Savings):} The algorithm delivers significant efficiency gains, with savings that peak when targeting high levels of quality. The trend is as follows:
\begin{itemize}
    \item Savings increase from 10\% at a moderate quality target (0.70 acceptance rate) to a peak of $\sim 30$\% for near-optimal targets (0.90+ acceptance rate).
    \item However, savings decrease as the target approaches 100\%. This is because the extreme quality requirement forces the adaptive algorithm to use its maximum sample budget, causing its behavior to converge with the non-adaptive baseline. Importantly, even in this scenario, it never performs worse.
\end{itemize}

This peak in savings at high (but not perfect) quality levels demonstrates the algorithm's core strength: its ability to recognize when a near-optimal response has been found and stop generation early. In contrast, non-adaptive methods must always continue sampling to maintain the same quality guarantee.

Efficiency gains increase monotonically with target quality:
\begin{itemize}
    \item Roughly 10\% savings at 0.70 acceptance rate (moderate quality)
    \item Roughly 20\% savings at 0.80 acceptance rate (good quality)  
    \item Roughly 30\% savings at 0.90+ acceptance rate (near-optimal quality)
    \item When 100\% acceptance rate is targeted, our experimental setup suffers from maximum sample size leading that adaptive algorithm becomes closer to non-adaptive one. This explains decrease on save ratio as we get close to 100\% target acceptance rate though it never performs worse than non-adaptive algorithm with same computation budget.
\end{itemize}

The increasing savings at higher quality levels reflect the adaptive algorithm's ability to recognize when it has likely found a near-optimal response, while non-adaptive methods must continue sampling to maintain guarantees.

\subsection{Cross-Dataset and Cross-Model Insights}

A cross-experimental analysis of our results reveals three consistent and noteworthy findings:

\begin{itemize}
    \item \textbf{Robustness Across Configurations:} The performance metrics and advantages of the adaptive algorithm remained consistent across all 1,600 unique generation profiles. This demonstrates that the approach generalizes effectively and is not over-fit to specific model-dataset pairings.

    \item \textbf{Positive Scaling with Budget:} The superiority of the prompt-adaptive algorithm becomes more pronounced as the number of candidate generations increases, indicating that its advantages scale positively with a larger computational budget.

    \item \textbf{Reward Model Independence:} The performance trends were congruent for both the \rmmistral\ and \rmfsfair\ reward models. This suggests that the benefits of the adaptive strategy are independent of the specific reward function employed.
\end{itemize}

\end{document}